\theoremstyle{plain}
\newtheorem{theorem}{Theorem}[section]
\theoremstyle{definition}
\theoremstyle{remark}
\newtheorem{remark}[theorem]{Remark}
\def\eqref#1{equation~\ref{#1}}
\def\1{\bm{1}}
\newcommand{\train}{\mathcal{D}}
\DeclareMathAlphabet{\mathsfit}{\encodingdefault}{\sfdefault}{m}{sl}
\SetMathAlphabet{\mathsfit}{bold}{\encodingdefault}{\sfdefault}{bx}{n}
\def\gL{{\mathcal{L}}}
\newcommand{\E}{\mathbb{E}}
\newcommand{\piref}{\pi_\text{ref}}
\newcommand{\method}{AlphaDPO\xspace}
\newcommand{\ie}{\emph{i.e., }}
\newcommand{\wrt}{\emph{w.r.t. }}
\newcommand{\cf}{\emph{cf. }}
\definecolor{lightblue}{RGB}{173,216,230}
\newcommand{\rebuttal}[1]{{{#1}}}
\icmltitlerunning{AlphaDPO: Adaptive Reward Margin for Direct Preference Optimization}
\begin{document}

\twocolumn[
\icmltitle{
AlphaDPO: Adaptive Reward Margin for Direct Preference Optimization
}



\icmlsetsymbol{corr}{*}

\begin{icmlauthorlist}
\icmlauthor{Junkang Wu}{USTC}
\icmlauthor{Xue Wang}{Alibaba Group}
\icmlauthor{Zhengyi Yang}{USTC}
\icmlauthor{Jiancan Wu}{USTC}
\icmlauthor{Jinyang Gao}{Alibaba Group}
\icmlauthor{Bolin Ding}{Alibaba Group}
\icmlauthor{Xiang Wang}{USTC,corr}
\icmlauthor{Xiangnan He}{USTC,corr}
\end{icmlauthorlist}

\icmlaffiliation{USTC}{MoE Key Lab of BIPC, University of Science and Technology of China}
\icmlaffiliation{Alibaba Group}{Alibaba Group}

\icmlcorrespondingauthor{Xiang Wang}{xiangwang1223@gmail.com}
\icmlcorrespondingauthor{Xiangnan He}{hexn@ustc.edu.cn}

\icmlkeywords{Machine Learning, ICML}

\vskip 0.3in
]



\printAffiliationsAndNotice{\icmlEqualContribution} 

\begin{abstract}
   
Aligning large language models (LLMs) with human preferences requires balancing policy optimization with computational stability. While recent offline methods like DPO and SimPO bypass reinforcement learning’s complexity, they face critical limitations: DPO relies on static reference models that degrade with policy updates, and SimPO assumes a uniform target reward margin that ignores instance-wise preference strength. We propose AlphaDPO, an adaptive preference optimization framework that dynamically reparameterizes the reference distribution to address these issues. Our key innovation lies in an implicit reference model \(\hat{\pi}_{\text{ref}} \propto U(y|x)(\pi_\theta/\pi_{\text{ref}})^\alpha\), which interpolates between policy-driven specialization and uniform exploration while enabling instance-adaptive reward margins. Theoretically, we prove \method{} implicitly controls sequential KL divergence between iterative policy updates, ensuring stability even with poorly calibrated reference models. Empirically, \method{} achieves state-of-the-art performance on AlpacaEval 2 (58.7\% LC win rate) and Arena-Hard (35.7\% win rate) across Mistral2-7B, Llama3-8B, and Gemma2-9B, demonstrating robust alignment without multi-stage training. Our work establishes adaptive reference reparameterization as a principled mechanism for preference optimization.
    The code is available at \url{https://github.com/junkangwu/alpha-DPO}.
\end{abstract}

\section{Introduction}
\label{sec:introduction}
Learning from human feedback is essential for aligning large language models (LLMs) with human values and intentions \citep{Leike2018}, ensuring they are helpful, honest, and harmless \citep{Amanda_Alignment_2021}. Reinforcement learning from human feedback (RLHF) \citep{ChristianoLBMLA17, Ouyang0JAWMZASR22, StiennonO0ZLVRA20} is a widely used method for fine-tuning LLMs to achieve this goal. However, RLHF faces challenges, particularly in computational efficiency and training stability due to its multi-stage process.
Recently, alternative offline algorithms like DPO \citep{DPO2023} and SimPO \citep{SimPO2024} have been explored. Specifically, DPO reparameterizes the reward function in RLHF to directly learn a policy model ($\pi_\theta$) from preference data, removing the need for an explicit reward model. Building on DPO, 
SimPO removes the reference model requirement but introduces a target reward margin $\gamma$ to enhance the separation between response pairs, achieving leading performance.
This naturally raises the question: 
\begin{center}
    \textit{Do we really need a reference model in the alignment process?} 
\end{center}

This question prompts a deeper analysis of SimPO's underlying mechanism: it can be viewed as a variant of DPO where the original reference model $\piref$ is replaced by an \emph{implicit reference model} $\hat{\pi}_\text{ref}$. In SimPO, the target reward margin $\gamma$ actually reflects a constant difference between the log likelihoods of a selected response and a rejected one, \ie ($\log \hat{\pi}_\text{ref}(y_w|x) - \log \hat{\pi}_\text{ref}(y_l|x)$). As the constant difference $\gamma$ is independent of arbitrary responses, this implicitly assumes a uniform reference distribution (\cf Figure~\ref{fig:teaser}). By tuning $\gamma$, SimPO effectively finds an ``ideal'' uniform \emph{implicit reference model}, yielding substantial performance improvements over standard DPO, particularly when the original reference model $\piref$ is suboptimal \citep{Hong2024ORPOMP}.
\begin{figure*}[t]
    \includegraphics[width=\textwidth]{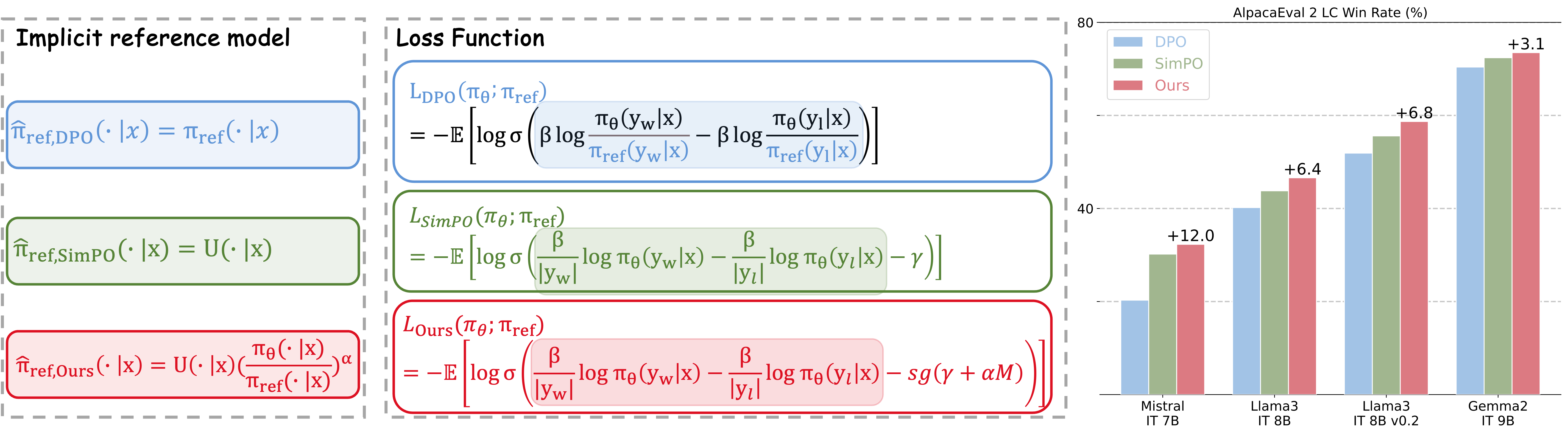}
    \vspace{-15pt}
    \caption{DPO, SimPO and \method mainly differ in their \emph{implicit reference model}, as indicated in the shaded box, leading to variations in their respective loss functions.
    \method outperforms DPO and SimPO across a wide range of settings on AlpacaEval 2.}
    \label{fig:teaser}
\end{figure*}
While conceptually appealing with empirical improvements, SimPO has two inherent limitations: (1) Applying the same target reward margin to all pairwise comparisons ignores the variability in the data \citep{margin1, margin2}, potentially compromising decision quality in some cases; and (2) The implicit assumption of a uniform reference model somehow lacks a solid theoretical foundation.
These limitations could hinder the model's ability to achieve alignment across varied training data, especially in domains with diverse preferences or complex reward structures \citep{filterDPO, he2024s}.

In response to these challenges, we propose an adaptive preference distribution, which gives rise to an adaptive reward margin for different response pairs.
We term this simple yet effective preference optimization algorithm AlphaDPO. Specifically, the adaptive preference distribution is heuristically set as: 
$\hat{\pi}_\text{ref}(y|x) = U(y|x) \left({\pi_\theta(y|x)} / {\pi_{\text{ref}}(y|x)}\right)^{\alpha}$. Here, $U(y|x)$, inspired by SimPO, employs a uniform distribution to establish an initial target reward margin, while the term $\left({\pi_\theta(y|x)} / {\pi_{\text{ref}}(y|x)}\right)^{\alpha}$ adjusts the balance between the policy model $\pi_\theta$ and the reference model $\piref$ to achieve a personalized reward margin.
When $\alpha = 0$, AlphaDPO reduces to SimPO; as $\alpha$ increases, the ratio between $\pi_\theta$ and $\pi_{\text{ref}}$ becomes dominant, enabling a personalized, dynamic target. More important, AlphaDPO offers several intriguing theoretical insights:
We demonstrate that \method{} balances alignment and diversity via KL divergence control. By approximating the sequential KL divergence between the policy and the reference model, \method{} achieves computational efficiency and robustness, particularly when the reference model is not well-calibrated at the token level.

Extensive analysis indicates that \method leverages preference data more effectively by assigning personalized margins to each pair, resulting in an improved policy model. As demonstrated in Figure~\ref{fig:teaser}, our method consistently outperforms DPO and SimPO across three base model settings (Mistral2-7B, Llama3-8B, and Gemma2-9B) on AlpacaEval 2 and Arena-Hard (\cf Section~\ref{sec:experiments}). Notably, we achieve a 58.7 length-controlled win rate on AlpacaEval 2, and a 35.7 win rate on Arena-Hard, establishing it as the strongest 8B open-source model to date.
}
\section{Preliminaries} 
\label{sec:preliminaries}
\textbf{Offline Alignment.}
In the offline alignment problem, we have access to a dataset \(\mathcal{D} = \{ (x, y_w, y_l) \}\) comprising prompts \(x\) and labeled response pairs \((y_w, y_l)\) obtained from a reference policy \(\pi_{\text{ref}}\). Here, \(y_w\) is the preferred (winning) response and \(y_l\) is the less preferred (losing) response. Although the underlying latent reward function \(r^*(x,y)\) that governs these preferences is not directly observable, the Bradley-Terry (BT) model~\citep{bradley1952rank} provides a framework for modeling pairwise comparisons:
\begin{equation}
    \mathbb{P}(y_w \succ y_l | x) = \frac{\exp(r^*(x,y_w))}{\exp(r^*(x,y_w))+\exp(r^*(x,y_l))},
\end{equation}
where \(r^*(x,y)\) assigns a latent reward to each response \(y\) given prompt \(x\). The goal of offline alignment is to learn a policy \(\pi_\theta\) that approximates \(r^*(x,y)\) using \(\mathcal{D}\).

\textbf{Reinforcement Learning from Human Feedback (RLHF).}
Classical offline alignment algorithms employ reinforcement learning with a KL-regularized reward objective~\citep{rlhf_1, rlhf_2, Ouyang0JAWMZASR22}, defined for a regularization parameter \(\eta > 0\):
\begin{equation}
    \mathop{\max}_{\pi_\theta} \mathbb{E}_{x \sim \mathcal{D}, y \sim \pi_\theta(y|x)}[r_\phi(x,y)] - \beta \mathbb{D}_{\text{KL}}[\pi_\theta(y|x) || \pi_{\text{ref}}(y|x)],
    \label{eq:rlhf}
\end{equation}
where \(r_\phi(x,y)\) is the reward function learned using the BT model on the preference dataset, $\pi_\theta$ is the policy model being optimized, \(\pi_{\text{ref}}\) is the fixed reference policy, typically obtained via supervised fine-tuning. The KL-divergence regularizes the policy to remain close to the reference model.

\textbf{Directed Preference Optimization (DPO).}
DPO~\citep{DPO2023} is a leading offline preference optimization method. Instead of learning an explicit reward model, DPO reparameterizes the reward function \(r(x,y)\) using a closed-form expression involving the optimal policy:
\begin{equation}
    r(x,y) = \beta \log \frac{\pi_\theta(y|x)}{\pi_{\text{ref}}(y|x)} + \beta \log Z(x),
\end{equation}
where \(Z(x)\) is the partition function independent of \(y\). This leads to the DPO loss for any triplet \((x, y_w, y_l)\):
\begin{equation}
    \begin{aligned}
        &\mathcal{L}_{\text{DPO}}(\pi_\theta;\piref) = - \E_{(x, y_w, y_l) \sim \train}  \\
        &\left[\log \sigma \left( \beta \log \frac{\pi_\theta(y_w|x)}{\pi_\theta(y_l|x)} - \beta \log \frac{\pi_{\text{ref}}(y_w|x)}{\pi_{\text{ref}}(y_l|x)} \right) \right] .
    \end{aligned}
\end{equation}
where \(\sigma(\cdot)\) denotes the sigmoid function.

\textbf{Simple Preference Optimization (SimPO).}
SimPO~\citep{SimPO2024} introduces two key contributions: (1) a length-normalized reward, calculated as the average log-probability per token of a response under the policy model \(\pi_\theta\), and (2) a target reward margin \(\gamma\) to ensure the reward difference between winning and losing responses exceeds this margin. The SimPO loss is formulated as:
\begin{equation}
    \begin{aligned}
        &\gL_{\text{SimPO}}(\pi_{\theta}) = -\E_{(x, y_w, y_l) \sim \train} \\
        &\left[ \log \sigma \left( \frac{\beta}{|y_w|}\log \pi_{\theta}(y_w|x) - \frac{\beta}{|y_l|}\log \pi_{\theta}(y_l|x) - \gamma \right) \right],
    \end{aligned}
\end{equation}
where \(|y|\) denotes the length (\ie number of tokens) of response \(y\), normalizing the reward by response lengths, and \(\gamma\) is the target reward margin.

\section{Method}
\label{sec:method}
In this section, we establish a unified framework that connects DPO and SimPO (Section~\ref{sec:reward_model}), highlighting the critical role of the reference model in preference optimization. We then introduce \method{} (Section~\ref{sec:proposed_method}), a new preference optimization algorithm that synergizes the strengths of both DPO and SimPO.
\subsection{A Common Framework for DPO and SimPO}
\label{sec:reward_model}
A key insight in our work is that SimPO implicitly adopts a uniform distribution over the vocabulary as its reference model, whereas DPO employs the SFT model as the reference. By examining the role of the reference model in both methods, we derive the following result:
\begin{restatable}{theorem}{commonframework}
    \label{thm:common_framework}
    Let $U(y|x)$ denote a uniform distribution over the vocabulary for a given input $x$, replacing $\pi_{\text{ref}}(y|x)$ in the DPO loss function. Then, the DPO loss function simplifies to:
    \begin{equation} 
        \begin{aligned}
            &\mathcal{L}(\pi_\theta; U) = - \mathbb{E}_{(x, y_w, y_l) \sim \mathcal{D}} \\
            &\left[\log \sigma \left( \beta \left( \log \pi_\theta(y_w|x) - \log \pi_\theta(y_l|x) \right) - \gamma \right) \right],
        \end{aligned}
    \end{equation} 
    where $\gamma = \beta \left( \log U(y_w|x) - \log U(y_l|x) \right)$ is a constant. Under a length-normalized reward formulation, this loss function becomes:
    \begin{equation} 
        \begin{aligned}
            &\mathcal{L}_{\text{LN}}(\pi_\theta; U) = - \mathbb{E}_{(x, y_w, y_l) \sim \mathcal{D}} \\
            &\left[\log \sigma \left( 
            \frac{\beta}{|y_w|} \log \pi_\theta(y_w|x) - 
            \frac{\beta}{|y_l|} \log \pi_\theta(y_l|x) - \gamma \right) \right].
        \end{aligned}
    \end{equation}
    Therefore, SimPO can be interpreted as a special case of DPO where the reference model is a uniform distribution.
\end{restatable}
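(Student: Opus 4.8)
The plan is to prove both displayed identities by direct substitution into the DPO loss and then read off the equivalence with SimPO by inspection. First I would take $\mathcal{L}_{\text{DPO}}(\pi_\theta;\piref)$ exactly as written in Section~\ref{sec:preliminaries} and replace every occurrence of $\pi_{\text{ref}}(y|x)$ by $U(y|x)$. The only place the reference model appears is the term $\beta\log\frac{\pi_{\text{ref}}(y_w|x)}{\pi_{\text{ref}}(y_l|x)} = \beta\bigl(\log\pi_{\text{ref}}(y_w|x) - \log\pi_{\text{ref}}(y_l|x)\bigr)$, which after the substitution becomes $\beta\bigl(\log U(y_w|x) - \log U(y_l|x)\bigr)$. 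The crucial observation is just that this quantity does not involve the optimization variable $\pi_\theta$; defining $\gamma := \beta\bigl(\log U(y_w|x) - \log U(y_l|x)\bigr)$ and carrying the subtracted reference term through the sigmoid argument gives the first claimed expression for $\mathcal{L}(\pi_\theta;U)$ verbatim.

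For the length-normalized statement I would rerun the same argument starting from the length-normalized reward $r(x,y) = \frac{\beta}{|y|}\log\frac{\pi_\theta(y|x)}{\pi_{\text{ref}}(y|x)}$ (the $y$-independent $\beta\log Z(x)$ term cancels in the pairwise difference and can be dropped from the outset). Substituting $\pi_{\text{ref}} = U$ and forming the winning-minus-losing reward difference produces $\frac{\beta}{|y_w|}\log\pi_\theta(y_w|x) - \frac{\beta}{|y_l|}\log\pi_\theta(y_l|x)$ plus a leftover term built only from $U$ and the response lengths; absorbing that term into $\gamma$ yields the second equation for $\mathcal{L}_{\text{LN}}(\pi_\theta;U)$. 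Placing this side by side with $\gL_{\text{SimPO}}(\pi_\theta)$ from Section~\ref{sec:preliminaries}, the two are the same function of $\pi_\theta$ once the constant offset is identified --- $\gamma$ here plays exactly the role of SimPO's target reward margin --- which is precisely the theorem's concluding claim.

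The one step that genuinely needs care, and the closest thing to an obstacle, is pinning down what ``a uniform distribution over the vocabulary'' means when $y$ is a token sequence rather than a single symbol, since this fixes the exact form of $\gamma$ and, in particular, the sense in which it is ``constant'' (namely, independent of $\pi_\theta$ rather than independent of the data). I would state the convention explicitly at the start of the proof --- for instance, that each token is drawn uniformly from a vocabulary of size $|V|$, so that $U(y|x) = |V|^{-|y|}$ and the reference contribution collapses to an expression in $|y_w|$ and $|y_l|$ alone --- and with that convention in place each of the two identities reduces to a single line of algebra, so I do not anticipate any further difficulty.
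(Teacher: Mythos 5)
Your proposal is correct and follows essentially the same route as the paper's proof: substitute $U$ for $\pi_{\text{ref}}$ in the DPO loss, observe that the reference term collapses to a quantity independent of $\pi_\theta$, call it $\gamma$, and match the result against the SimPO objective. Your explicit convention $U(y|x)=|\mathcal{V}|^{-|y|}$ and your clarification that $\gamma$ is ``constant'' only in the sense of being independent of $\pi_\theta$ (it still depends on $|y_w|$ and $|y_l|$) is in fact more careful than the paper's own justification, which asserts constancy across all samples via a vaguer ``distinct vocabulary subsets'' argument.
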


\rebuttal{
\begin{remark}
\textbf{Why do winning and losing responses have different probabilities under a uniform policy?}
Although both winning and losing responses are sampled from the same policy model, their selection probabilities diverge through the reward model's scoring mechanism. Specifically, winning response ($y_w$) are those assigned higher scores, while losing response ($y_l$) receive lower scores. Even under a uniform policy, these selections remain distinct because their selection probabilities are tied to their relative scores, not to the underlying uniform distribution over the response space.
\end{remark}
}

Theorem~\ref{thm:common_framework} establishes a unified framework for DPO and SimPO by showing that replacing the reference model $\pi_{\text{ref}}$ in DPO with a uniform distribution $U$ reduces the DPO loss to the SimPO loss, up to a constant term $\gamma$. This reveals that SimPO is essentially DPO with a uniform reference model. Consequently, the term $\beta \left( \log \pi_{\text{ref}}(y_w|x) - \log \pi_{\text{ref}}(y_l|x) \right)$ collapses to a constant, reflecting the difference in selection probabilities induced by the scoring mechanism. This emphasizes the pivotal role of the implicit reference model in preference optimization.
\begin{figure}{r} 
    \centering
    \!\!\!\!\!\!\!\! \includegraphics[width=0.4\textwidth]{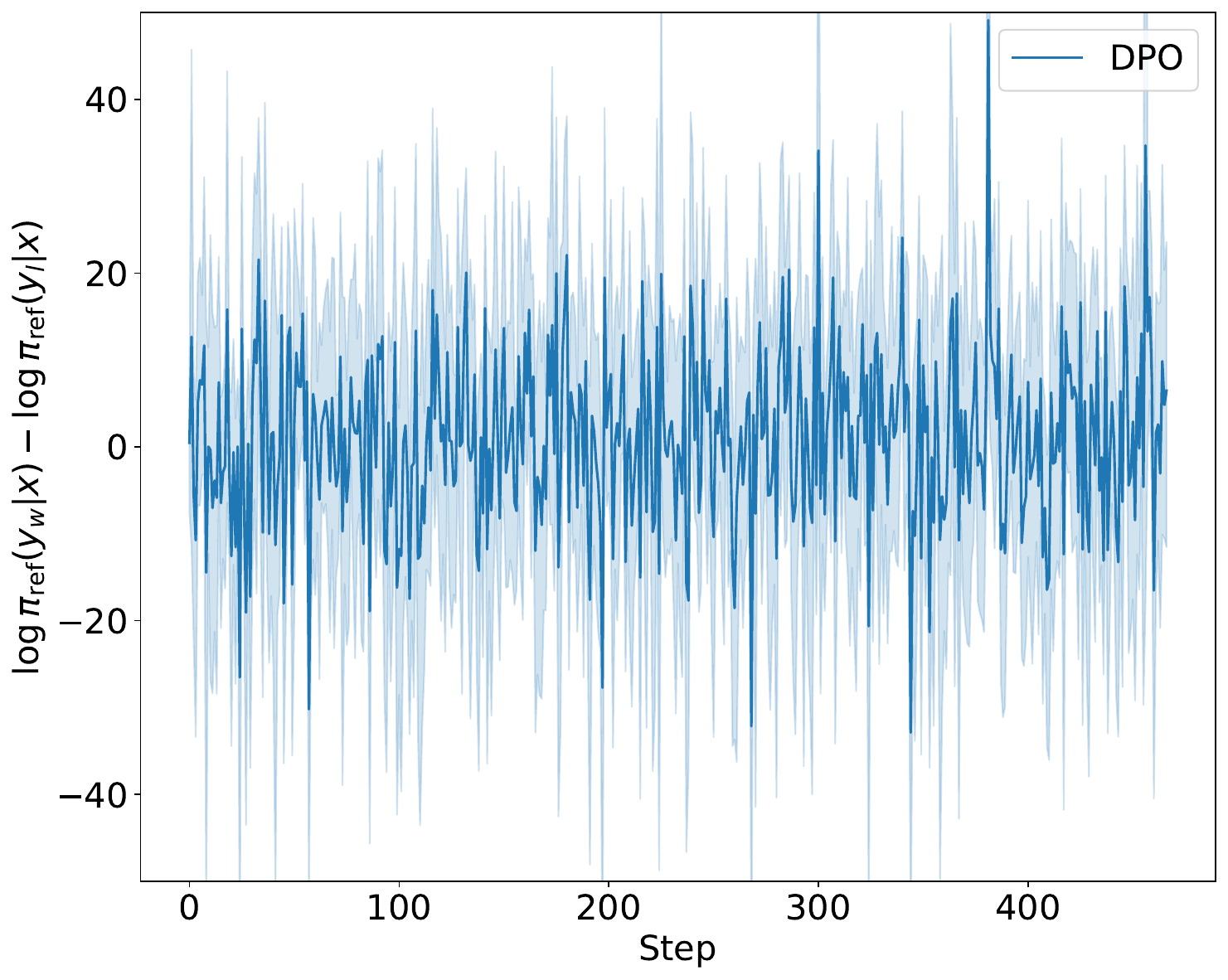} 
    \caption{$\log \pi_{\mathrm{ref}}(y_w | x) - \log \pi_{\mathrm{ref}}(y_l | x)$ along the training steps. 
    The random fluctuations suggest $\pi_{\mathrm{ref}}$ struggles to distinguish between the preferred and less preferred responses.
    }
    \vspace{-20pt}
    \label{fig:step_dpo}
\end{figure}

\textbf{Limitations of DPO:}
As depicted in Figure~\ref{fig:step_dpo}, the reference model $\pi_{\text{ref}}$ in DPO may not effectively distinguish between the preferred ($y_w$) and less preferred ($y_l$) responses, as its outputs do not inherently reflect the preference information. In contrast, using a uniform distribution as in SimPO results in a reward margin $\gamma$, ensuring that the reward difference between the preferred and less preferred responses is entirely governed by the policy model $\pi_\theta$.

\textbf{Limitations of SimPO}: While SimPO simplifies the loss function by using a constant offset $\gamma$, this one-size-fits-all approach overlooks the variability inherent in different data instances. The fixed $\gamma$ across all training samples could lead to suboptimal performance, especially in the presence of noise or inconsistencies in the data. Moreover, completely discarding reference model $\piref$ may eliminate potentially useful prior knowledge about language structure and semantic relationships that could help discriminate between similar response pairs.

\subsection{Proposed Method: \method}
\label{sec:proposed_method}
Our analysis highlights the significant impact of the reference model in preference optimization. To overcome the limitations identified in both DPO and SimPO, we propose the following principles:

\textbf{Principle 1}: \textit{The reference model should contribute to differentiating between preferred and less preferred responses.}

\textbf{Principle 2}: \textit{The reference model should adapt to discrepancies between response pairs to capture instance-specific nuances.}

Principle 1 addresses the shortcoming in DPO, where the reference model may inadequately distinguish between $y_w$ and $y_l$, introducing uncertainty without a guaranteed margin. Principle 2 rectifies the oversimplification in SimPO, where the absence of a reference model fails to account for variability across different instances.

\textbf{Deriving the \method{} Objective.} Starting from the standard Reinforcement Learning (RL) objective for preference optimization, we redefine the reference model $\pi_{\text{ref}}$ as an \emph{implicit reference model} $\hat{\pi}_{\text{ref}}$, formulated as:
\begin{equation}
    \hat{\pi}_{\text{ref}}(y|x) \propto U(y|x) \left(\frac{\pi_\theta(y|x)}{\pi_{\text{ref}}(y|x)} \right)^\alpha,
    \label{pi_ref_eq}
    \vspace{-15pt}
\end{equation}

where $\alpha$ is a hyperparameter that determines the extent to which the policy model $\pi_\theta$ modifies the reference model $\pi_{\text{ref}}$, and $U(y|x)$ represents a uniform distribution providing a stable baseline.
\rebuttal{
This formulation shares structural similarities with the weak-to-strong preference optimization framework \cite{WSPO}, where the aligned strong model $\pi_r^{\text{strong}}$ is expressed as:
\begin{equation}
    \vspace{-15pt}
    \pi_r^{\text{strong}}(y \mid x) \propto \pi_{\text{ref}}^{\text{strong}}(y \mid x) \left(\frac{\pi_{r}^{\text{weak}}(y \mid x)}{\pi_{\text{ref}}^{\text{weak}}(y \mid x)}\right)^{\alpha}.
    \label{eq:weak-to-strong}
\end{equation}

Both formulations employ model ratios to dynamically adjust the reference distribution --- $\frac{\pi_\theta}{\pi_{\text{ref}}} $ in Equation \ref{pi_ref_eq} and $\frac{\pi_{r}^{\text{weak}}}{\pi_{\text{ref}}^{\text{weak}}}$ in Equation \ref{eq:weak-to-strong}.
Specifically, $\hat{\pi}_{\text{ref}}$ interpolates between existing methods, controlled by the smoothness parameter $\alpha$: i) When $\alpha = 0$, $\hat{\pi}_{\text{ref}}$ reduces to the uniform distribution $U(y|x)$, equivalent to the assumption in SimPO. ii) When $\alpha = 1$, $\hat{\pi}_{\text{ref}}$ fully incorporates the proportionality $\frac{\pi_\theta}{\pi_{\text{ref}}} $, aligning with DPO.

The motivation behind this redefinition is to overcome the potential suboptimality of 
both SimPO and DPO by learning instance-specific adjustments while preserving useful prior knowledge. Rather than relying on either a fixed reference model or a constant margin, $\hat{\pi}_{\text{ref}}$ creates a flexible framework that can better capture varying degrees of preference strength across different instances.
Further details are provided in Appendix~\ref{motivation_of_pi_ref}.
}

Substituting $\hat{\pi}_{\text{ref}}$ into the original DPO loss function, we obtain the \method{} objective:
\begin{equation}
\begin{aligned}
    &\mathcal{L}_{\text{\method}}(\pi_\theta, \piref) \\
    &= - \mathbb{E}_{(x, y_w, y_l) \sim \mathcal{D}} \\
    &\left[\log \sigma \left( \beta \log \frac{\pi_\theta(y_w|x)}{\pi_\theta(y_l|x)} - \beta \log \frac{\hat{\pi}_{\text{ref}}(y_w|x)}{\hat{\pi}_{\text{ref}}(y_l|x)} \right) \right] \\
    &= - \mathbb{E}_{(x, y_w, y_l) \sim \mathcal{D}} \\
    &\left[\log \sigma \left( \beta \left( \log \frac{\pi_\theta(y_w|x)}{\pi_\theta(y_l|x)} \right) - \left[ \gamma + \alpha M(x, y_w, y_l) \right] \right) \right],
\end{aligned}
\end{equation}
where $\gamma = \beta \left( \log \frac{U(y_w|x)}{U(y_l|x)} \right)$ is a constant offset as before, and $M(x, y_w, y_l)$ is defined as:
\begin{equation}
    M(x, y_w, y_l) = \beta \left( \log \frac{\pi_\theta(y_w|x) \pi_{\text{ref}}(y_l|x)}{\pi_{\text{ref}}(y_w|x) \pi_\theta(y_l|x)} \right).
\end{equation}
The term $M(x, y_w, y_l)$ measures the divergence between the policy model $\pi_\theta$ and the reference model $\pi_{\text{ref}}$ over the response pairs, effectively capturing instance-specific discrepancies as described in Principle 2.

\textbf{Stop Gradient on $\hat{\pi}_{\text{ref}}$}: Although $\hat{\pi}_{\text{ref}}$ depends on $\pi_\theta$ and $\pi_{\text{ref}}$, it is intended to serve as a fixed reference during optimization. To prevent gradients from backpropagating through $\hat{\pi}_{\text{ref}}$ to $\pi_\theta$, we apply a stop-gradient operation, denoted as $\operatorname{sg}[\cdot]$, ensuring that $\hat{\pi}_{\text{ref}}$ remains constant during the policy updates.

\textbf{Normalization of $M(x, y_w, y_l)$}: To stabilize training and avoid $M(x, y_w, y_l)$ dominating the loss due to scale variations, we apply Z-score normalization \citep{z-score} to $M$:
\begin{equation}
    M^\ast(x, y_w, y_l) = \frac{M(x, y_w, y_l) - \mu_M}{\sigma_M},
\end{equation}
where $\mu_M$ and $\sigma_M$ are the mean and standard deviation of $M$ computed over the training dataset.

\textbf{Length-normalized Reward Formulation}: Inspired by SimPO, we incorporate length normalization into our method. This adjustment ensures that rewards are scaled appropriately with respect to the length of the sequences, thereby stabilizing the training process. 
As demonstrated in our experiments (\cf Appendix~\ref{sec:appendix_ln}), we also confirmed that even without length normalization, our method remains effective and continues to show performance improvements.

\textbf{Final Objective}: Incorporating the above considerations, the final \method{} loss function becomes:
\begin{equation}
    \begin{aligned}
        &\mathcal{L}_{\text{\method}}(\pi_\theta, \pi_{\text{ref}}) = - \mathbb{E}_{(x, y_w, y_l) \sim \mathcal{D}} \\
        &\left[ \log \sigma \left( u(x,y_w,y_l) - \operatorname{sg} \left[ \gamma + \alpha M^\ast(x, y_w, y_l) \right] \right) \right],
    \end{aligned}
\end{equation}
where $u(x,y_w,y_l)=\frac{\beta}{|y_w|} \log \pi_\theta(y_w|x) - \frac{\beta}{|y_l|} \log {\pi_\theta(y_l|x)}$.
This formulation ensures balanced influence between the policy and reference models, aligning with Principles 1 and 2. By incorporating the normalized discrepancy term \(M^\ast(x, y_w, y_l)\), \method{} adaptively adjusts the margin between preferred and less preferred responses based on instance-specific differences, enhancing learning.

\section{Theoretical Analysis of \method}
Balancing alignment performance with response diversity is crucial in recent alignment methods \citep{zeng2024tokenlevel, WangJYLC24, Ji0NKW00H24}. A popular approach is the Token-Level Direct Preference Optimization (TDPO) method \citep{zeng2024tokenlevel}, which introduces fine-grained control of the KL divergence at the token level. Given a prompt \( x \) and preceding tokens \( y^{< t} \), the policy \( \pi_\theta \) generates the next token \( z \) by sampling from \( \pi_\theta(z|x, y^{< t}) \).

By mapping the reward model to a token-level format, the TDPO loss is defined as:
{\small
\begin{equation}
    \begin{aligned}
        &\mathcal{L}_{\text{TDPO}}(\pi_\theta) = - \mathbb{E}_{(x, y_w, y_l) \sim \mathcal{D}} \\
        &\left[\log \sigma \left( \beta \log \frac{\pi_\theta(y_w|x)}{\piref(y_w|x)} - \beta \log \frac{\pi_\theta(y_l|x)}{\piref(y_l|x)} - \delta(x,y_w,y_l) \right) \right],
    \end{aligned}
\end{equation}
}
where the margin term \( \delta(x, y_w, y_l) \) is defined as:
{\small
\begin{align}
    & \delta(x,y_w,y_l) = \beta \mathbb{D}_{\text{SeqKL}}[x,y_l;\piref||\pi_\theta] - \beta \mathbb{D}_{\text{SeqKL}}[x,y_w;\piref||\pi_\theta], \\
    & =\beta \sum_{t=1}^{|y_l|} \mathbb{E}_{z}[\log \frac{\piref(z|[x,y^{< t}_l])}{\pi_\theta(z|[x,y^{ < t}_l])}] -\beta \sum_{t=1}^{|y_w|} \mathbb{E}_{z}[\log \frac{\piref(z|[x,y^{< t}_w])}{\pi_\theta(z|[x,y^{ < t}_w])}].
\end{align}
}
Here, $z\sim \piref$ and $\mathbb{D}_{\text{SeqKL}}[x,y;\piref||\pi_\theta]$ denotes the sequential KL divergence between $\piref$ and $\pi_\theta$ along the sequence $y$ given $x$.

Below we present a lemma establishing theoretical connections between TDPO and AlphaDPO \wrt the margin terms.
\begin{restatable}[Equivalence of Margin Terms]{lemma}{marginequivalence}
    \label{lemma:margin_equivalence_test}
    Consider the margin term $\delta(x, y_w, y_l)$ that represents the difference in sequential KL divergences between a reference policy $\pi_{ref}$ and policy $\pi_{\theta}$ for preferred sequence $y_w$ and rejected sequence $y_l$, conditioned on input x:
    \begin{equation}
    \begin{aligned}    \delta(x, y_w, y_l) & = \beta \mathbb{D}_{\text{SeqKL}}[x,y_l;\piref||\pi_\theta] \\
     & \quad - \beta \mathbb{D}_{\text{SeqKL}}[x,y_w;\piref||\pi_\theta],
    \end{aligned}
    \end{equation}
    Under the assumption that the sequential KL divergence can be approximated by the log-likelihood ratio of complete sequences, the margin term $\delta(x, y_w, y_l)$ admits the following approximation:
    \begin{equation*}
    \begin{aligned}
    \delta(x, y_w, y_l) 
    &\approx 
    \beta \left( \log \dfrac{\pi_\theta(y_w|x)}{\pi_{\text{ref}}(y_w|x)} - \log \dfrac{\pi_\theta(y_l|x)}{\pi_{\text{ref}}(y_l|x)} \right) \\
    &= M(x,y_w,y_l),   
    \end{aligned}
    \end{equation*}
    where $M(x,y_w,y_l)$ is the margin term in the AlphaDPO objective.
\end{restatable}
The proof follows from the sequence-level approximation of the KL divergence between $\pi_\text{ref}$ and $\pi_{\theta}$ along a sequence $y$, where:
\begin{align*}
\mathbb{D}_{\text{SeqKL}}[x, y; \pi_{\text{ref}} || \pi_\theta] &= \sum_{t=1}^{|y|} \mathbb{E}_{z \sim \pi_{\text{ref}}} \left[ \log \dfrac{\pi_{\text{ref}}(z|x, y^{<t})}{\pi_\theta(z|x, y^{<t})} \right] \\
&\approx \log \dfrac{\pi_{\text{ref}}(y|x)}{\pi_\theta(y|x)}.
\end{align*}
Applying this approximation to both \( y_w \) and \( y_l \), the difference \( \delta(x, y_w, y_l) \) simplifies to the difference of log-probability ratios, thereby establishing the equivalence with the margin term in \method.

Lemma~\ref{lemma:margin_equivalence_test} highlights that the margin term \( \delta(x, y_w, y_l) \), which represents the sequential KL divergence difference between preferred and rejected responses, can be directly mapped to the term \( M(x, y_w, y_l) \) in \method{}. This mapping underscores the theoretical connection between the two approaches in terms of alignment control. While TDPO operates at the token level and provides fine-grained control, \method{} offers greater computational efficiency by operating at the sequence level without sacrificing performance (\cf Appendix Table \ref{tab:tdpo}). Moreover, the sequence-level approximation enhances robustness to token-level noise in \( \pi_{\text{ref}} \), making \method{} particularly suited for scenarios where the reference policy may not be perfectly aligned. 

\rebuttal{
\textbf{Advantages of the margin term \( \delta(x, y_w, y_l) \).} 
The core contribution of TDPO lies in introducing the margin term $r(x,y_w) - r(x,y_l) - \delta(x, y_w, y_l)$, which is similar to \textit{DPO with an offset} \cite{amini2024direct} and helps control the KL divergence. 
In contrast, \method{} generalizes this approach by replacing $\delta(x, y_w, y_l)$ with $M(x, y_w, y_l)$, a more flexible margin term inspired by SimPO.
Appendix  Table \ref{tab:tdpo} supports this with performance comparisons between TDPO and \method{}. These findings illustrate: i) Adding an offset to DPO and its variants is a robust strategy, applicable to both TDPO and \method{}. ii) The choice of offset is critical --- the sequence-level margin term $M(x, y_w, y_l)$ is particularly effective when applied to potentially unreliable reference models.
}
\section{Experiments}
\label{sec:experiments}
\begin{table*}[t]
    \caption{\textbf{AlpacaEval 2, Arena-Hard results across four settings.} ``WR'' denotes the raw win rate,``LC'' the length-controlled win rate, and ``SC'' the style-controlled win rate. The best results are highlighted in bold, while the second-best are underlined.}
    \label{tab:main_results}
    \begin{center}
    \resizebox{1.0\textwidth}{!}{%
    \begin{tabular}{lccccccccccc}
    \toprule
    \multirow{3}{*}{\textbf{Method}} & \multicolumn{5}{c}{\textbf{Llama3-Instruct (8B)}} & \multicolumn{5}{c}{\textbf{Mistral-Instruct (7B)}} \\
    \cmidrule(lr){2-6} \cmidrule(lr){7-11}
    & \multicolumn{2}{c}{\textbf{AlpacaEval 2}} & \multicolumn{3}{c}{\textbf{Arena-Hard}} & \multicolumn{2}{c}{\textbf{AlpacaEval 2}} & \multicolumn{3}{c}{\textbf{Arena-Hard}} \\
    \cmidrule(lr){2-3} \cmidrule(lr){4-6} \cmidrule(lr){7-8} \cmidrule(lr){9-11}
    & \textbf{LC (\%)} & \textbf{WR (\%)} & \textbf{SC (\%)}  & \textbf{LC (\%)} & \textbf{WR (\%)}
    &  \textbf{LC (\%)} & \textbf{WR (\%)} & \textbf{SC (\%)}  & \textbf{LC (\%)} & \textbf{WR (\%)} \\
    \midrule
    SFT & 24.0  & 23.6 & 22.1  & 22.2  & 22.4 &  19.0 & 15.4  &  18.3 & 13.2 &  12.9 \\ \midrule
    DPO & 40.2 & \textbf{38.1} &  31.9 & 32.0 & 31.2 & 20.3 & 17.9 &  18.9 & 13.7 & 13.4  \\
    IPO & 35.9 & 34.4 &  29.2 & 29.9 & 30.2 & 22.3 & 18.6 &  22.4 & 16.6 & 16.2  \\
    CPO & 29.6 & 34.4 &  26.3 & 28.1 & 29.4 & 26.2 & 31.7 &  \underline{26.6} & \underline{21.4} & \textbf{23.8}   \\
    KTO & 38.3 & 34.1 &  30.3 & 30.6 & 30.3 & 19.4 & 20.3 &  21.5 & 16.0 & 16.8   \\
    ORPO& 31.6 & 29.8 &  26.6 & 26.6 & 26.3 & 24.0 & 23.0 &  24.4 & 18.5 & 18.6    \\
    R-DPO & 40.3 & 37.3 &  33.1 & 32.9 & \underline{32.9} & 21.4 & 22.2 &  18.7 & 14.0 & 13.8   \\
    SimPO & \underline{43.8} & \underline{38.0} &  \underline{33.5} & \underline{33.5} & 32.6 & \underline{30.2} & \underline{32.1} &  {25.6} & 19.8 & 20.1  \\
     \midrule
    \method{} & \textbf{46.6} & \textbf{38.1} &  \textbf{34.1} & \textbf{34.2} & \textbf{33.3} & \textbf{32.3} & \textbf{32.6} &  \textbf{27.2} & \textbf{21.5} & \underline{21.5} \\
    \midrule[.7pt]
    \multirow{3}{*}{\textbf{Method}} & \multicolumn{5}{c}{\textbf{Llama3-Instruct v0.2 (8B)}} & \multicolumn{5}{c}{\textbf{Gemma2-Instruct (9B)}} \\
    \cmidrule(lr){2-6} \cmidrule(lr){7-11}
    & \multicolumn{2}{c}{\textbf{AlpacaEval 2}} & \multicolumn{3}{c}{\textbf{Arena-Hard}} & \multicolumn{2}{c}{\textbf{AlpacaEval 2}} & \multicolumn{3}{c}{\textbf{Arena-Hard}} \\
    \cmidrule(lr){2-3} \cmidrule(lr){4-6} \cmidrule(lr){7-8} \cmidrule(lr){9-11}
    & \textbf{LC (\%)} & \textbf{WR (\%)} & \textbf{SC (\%)}  & \textbf{LC (\%)} & \textbf{WR (\%)}
    & \textbf{LC (\%)} & \textbf{WR (\%)} & \textbf{SC (\%)}  & \textbf{LC (\%)} & \textbf{WR (\%)} \\
    \midrule
    SFT & 24.0  & 23.6 & 22.1  & 22.2  & 22.4  &  48.7 & 36.5  & 32.0  & 42.2 & 42.1  \\ \midrule
    DPO & 51.9 & \underline{50.8} &  26.1 & 31.5 & 33.9 & 70.4 & \textbf{66.9} &  43.9 & 55.6 & \underline{58.8}  \\
    IPO & 40.6 & 39.6 &  \textbf{31.1} & 34.2 & 34.9 & 62.6 & 58.4 &  41.1 & 51.9 & 53.5   \\
    CPO & 36.5 & 40.8 &  29.4 & 32.8 & 34.2 & 56.4 & 53.4 &  42.4 & 53.3 & 55.2   \\
    KTO & 41.4 & 36.4 &  27.1 & 29.5 & 28.9 & 61.7 & 55.5 &  41.7 & 52.3 & 53.8   \\
    ORPO & 36.5 & 33.1 &  28.8 & 30.8 & 30.4 & 56.2 & 46.7 &  35.1 & 45.3 & 46.2   \\
    R-DPO & 51.6 & 50.7 &  29.2 & \underline{34.3} & \underline{35.0} & 68.3 & \textbf{66.9} &  {45.1} & {55.9} & {57.9}  \\
    SimPO & \underline{55.6} & 49.6 &  28.5 & 34.0 & 33.6 & \underline{72.4} & 65.0 &  \underline{45.0} & \underline{56.1} & 57.8  \\
    \midrule
    \method{} & \textbf{58.7} & \textbf{51.1} &  \underline{30.8} & \textbf{36.3} & \textbf{35.7} & \textbf{73.4} & \underline{66.1} &  \textbf{48.6} & \textbf{59.3} & \textbf{60.8}   \\
    \bottomrule
    \end{tabular}
    }
    \end{center}
\end{table*}

In this section, we present the main results of our experiments, highlighting the superior performance of \method over existing methods on various benchmarks and ablation studies to analyze the impact of different components of \method.
\vspace{-0.5em}
\subsection{Experiments Setup}
\label{sec:exp_setup}
\textbf{Models and training settings.} We optimize preferences using three model families: Llama3-8B \citep{llama3modelcard}, Mistral2-7B \citep{Jiang2023Mistral7}, and Gemma2-9B \citep{gemma2}, all in the Instruct setup. Following \citet{SimPO2024}, we utilize pre-trained instruction-tuned models (\href{https://huggingface.co/meta-llama/Meta-Llama-3-8B-Instruct}{meta-llama/Meta-Llama-3-8B-Instruct}, \href{https://huggingface.co/mistralai/Mistral-7B-Instruct-v0.2}{mistralai/Mistral-7B-Instruct-v0.2}, \href{https://huggingface.co/google/gemma-2-9b-it}{google/gemma-2-9b-it}) as SFT models. For a fair comparison, we use the same training data as SimPO: {princeton-nlp/llama3-ultrafeedback-armorm}\footnote{https://huggingface.co/datasets/princeton-nlp/llama3-ultrafeedback-armorm}, {princeton-nlp/mistral-instruct-ultrafeedback}\footnote{ https://huggingface.co/datasets/princeton-nlp/mistral-instruct-ultrafeedback}, and {princeton-nlp/gemma2-ultrafeedback-armorm} \footnote{https://huggingface.co/datasets/princeton-nlp/gemma2-ultrafeedback-armorm} for Llama3-8B, Mistral2-7B, and Gemma2-9B, respectively. Additionally, the v0.2 Llama3-Instruct setup uses \href{https://huggingface.co/RLHFlow/ArmoRM-Llama3-8B-v0.1}{RLHFlow/ArmoRM-Llama3-8B-v0.1} \citep{ArmoRMLlama3} as the reward model for ranking generated data, significantly enhancing performance. These configurations represent state-of-the-art methods, positioning our models among the top performers on various leaderboards.

\begin{table*}[t]
    \caption{
    \textbf{Ablation studies under Llama3-Instruct v0.2 and Mistral-Instruct settings.} We ablate each key design of \method{} and explore variants of the \emph{implicit reference model} $\hat{\pi}_{\text{ref}}$.
    }
    \label{tab:ablation}
    \begin{center}
    \resizebox{1.0\textwidth}{!}{%
    \begin{tabular}{lccccccccccc}
    \toprule
    \multirow{3}{*}{\textbf{Method}} & \multicolumn{5}{c}{\textbf{Llama3-Instruct v0.2 (8B)}} & \multicolumn{5}{c}{\textbf{Mistral-Instruct (7B)}} \\
    \cmidrule(lr){2-6} \cmidrule(lr){7-11}
    & \multicolumn{2}{c}{\textbf{AlpacaEval 2}} & \multicolumn{3}{c}{\textbf{Arena-Hard}} & \multicolumn{2}{c}{\textbf{AlpacaEval 2}} & \multicolumn{3}{c}{\textbf{Arena-Hard}} \\
    \cmidrule(lr){2-3} \cmidrule(lr){4-6} \cmidrule(lr){7-8} \cmidrule(lr){9-11}
    & \textbf{LC (\%)} & \textbf{WR (\%)} & \textbf{SC (\%)}  & \textbf{LC (\%)} & \textbf{WR (\%)}
    &  \textbf{LC (\%)} & \textbf{WR (\%)} & \textbf{SC (\%)}  & \textbf{LC (\%)} & \textbf{WR (\%)} \\
    \midrule
    $U(\cdot|x)$ & 55.6 & 49.6 &  28.5 & 34.0 & 33.6 & 30.2 & 32.1 &  25.6 & 19.8 & 20.1 \\ \midrule
    $U(\cdot | x) \left({\pi_\theta(\cdot | x)}/{\piref(\cdot | x)}\right)^\alpha$ & 58.7 & 51.1  & 30.8 & 36.3 & 35.7  & 32.3 & 32.6 &  27.2 & 21.5 & 21.5  \\ \midrule
    w/o Normalization & 56.5 & 49.7 &  23.1 & 28.4 & 27.7 & 32.1 & 33.1 &  25.2 & 19.7 & 19.6   \\ 
    w/o sg & 2.7 & 3.7 &  7.7 & 5.4 & 6.3 & 27.2 & 27.7 &  25.8 & 20.3 & 20.7   \\
    $\gamma=0$ & 51.2  & 44.9 &  30.0 & 34.5  & 33.3  &  31.9 & 31.3 &  24.2  & 19.6 & 19.3 \\ \midrule
    $U(\cdot | x) \left({\pi_\theta(\cdot | x)}\right)^\alpha$ & 57.2 & 50.4 &  27.6 & 33.5 & 32.9 & 31.6 & 34.1 &  26.9 & 21.3 & 21.5 \\
    $U(\cdot | x) \left({\piref(\cdot | x)}\right)^\alpha$ & 56.3 & 49.5 &  29.0 & 34.3 & 33.5 & 28.6 & 30.9 &  25.5 & 20.1 & 20.3   \\
    $U(\cdot | x) \left(1/{\piref(\cdot | x)}\right)^\alpha$ & 56.3 & 49.2 &  29.0 & 34.4 & 33.8 & 32.2 & 33.1 &  26.0 & 20.7 & 20.6   \\
    \bottomrule
    \end{tabular}
    }
    \end{center}
\end{table*}
\textbf{Evaluation benchmarks.} 
We evaluate our models using two widely recognized open-ended instruction-following benchmarks: AlpacaEval 2~\citep{AlpacaEval} and Arena-Hard~\citep{arenahard2024}. These benchmarks assess the models' conversational abilities across a diverse range of queries and are extensively used by the research community. For AlpacaEval 2, we report the length-controlled win rate (LC) and raw win rate (WR). For Arena-Hard, we provide the win rate (WR), length-controlled win rate (LC), and style-controlled win rate (SC) compared to baseline models. Note that style significantly impacts performance on these leaderboards.

\textbf{Baselines.}
We compare \method\ with several state-of-the-art preference optimization methods: DPO~\citep{DPO2023}, SimPO~\citep{SimPO2024}, IPO~\citep{Azar2023AGT}, CPO~\citep{xu2024contrastive}, KTO~\citep{Ethayarajh2024KTOMA}, ORPO~\citep{Hong2024ORPOMP}, and R-DPO~\citep{Park2024DisentanglingLF}. We also include the SFT model as a baseline. We thoroughly tune the hyperparameters for each baseline and report the best performance. Further details can be found in Appendix~\ref{sec:appendix_implement}.

\subsection{Main Results}
\label{sec:main_results}
\textbf{\method{} consistently outperforms existing preference optimization methods.} As shown in Table~\ref{tab:main_results}, while all preference optimization algorithms improve over the SFT baseline, \method{} achieves superior performance compared to existing methods specifically on the AlpacaEval 2 LC metric. These significant improvements highlight the robustness and effectiveness of \method{}. Specifically, \method{} outperforms the best baseline by an average of 3 percentage points in AlpacaEval 2 LC win rate. Furthermore, on benchmarks such as Arena-Hard, \method{} achieves state-of-the-art or second-best results, demonstrating its competitiveness across different evaluation settings.

\textbf{Impact of Metrics on Leaderboard Rankings.} While both benchmarks are widely used, the standard win rate (WR) metric shows poor separability among different methods, making it challenging to distinguish their relative performance. Minor differences in WR may stem from biases towards generating detailed or aesthetically pleasing responses, aligning with observations by \citet{AlpacaEval2LC} and \citet{style_bias}. In contrast, the length-controlled (LC) and style-controlled (SC) win rates offer more reliable and interpretable metrics, as they reduce the influence of verbosity and stylistic biases, thereby better reflecting true performance.

\textbf{The importance of the design on the implicit reference model.}
As the core contribution of this work is to propose a novel reference model $\hat{\pi}_{\text{ref}}(y|x) = U(y|x) \left( {\pi_\theta(y|x)}/{\pi_{\text{ref}}(y|x)} \right)^\alpha$, we also evaluate other variants of the reference model. Specifically, we compare \method{} with three variants: (1) $\hat{\pi}_{\text{ref}}(y|x) = U(y|x) \left( {\pi_\theta(y|x)}\right)^\alpha$, (2) $\hat{\pi}_{\text{ref}}(y|x) = U(y|x) \left( {\pi_{\text{ref}}(y|x)} \right)^\alpha$, and (3) $\hat{\pi}_{\text{ref}}(y|x) = U(y|x) \left( 1/{\pi_{\text{ref}}(y|x)} \right)^{\alpha}$. As shown in Table~\ref{tab:ablation}, most of the variants perform better than SimPO ($\hat{\pi}_{\text{ref}}(y|x) = U(y|x)$), which demonstrates the importance of adaptive margin between pairs. Besides, our proposed reference model consistently outperforms other variants, indicating the effectiveness of the proposed design.

\textbf{All key designs in \method{} are crucial.}
To further analyze the impact of different components in \method{}, we conduct ablation studies by removing key components from \method{}. As shown in Table~\ref{tab:ablation}, removing normalization, stop gradient, or setting $\gamma=0$ all lead to significant performance drops, highlighting the importance of these components in \method{}.

\subsection{KL divergence control in \method{}}

\textbf{Outstanding Performance and Lower KL.}
As noted in \citet{DPO2023, zeng2024tokenlevel}, it is crucial to consider both performance and KL divergence when comparing algorithms. A slightly higher win rate accompanied by a significantly higher KL divergence is often not desirable. In line with the design principles of TDPO, we implemented SimPO and \method{}. Figure~\ref{fig:kl_chosen}~\ref{fig:kl_rejected} presents the KL divergence curves. The results indicate that as $\alpha$ increases, the KL divergence of \method{} remains stable or even decreases slightly when compared to SimPO. This demonstrates that \method{} not only achieves superior performance but also maintains a lower KL divergence, indicating a better balance between alignment and control of KL divergence during the training process.

\begin{figure*}[t]
    \begin{subfigure}[t]{0.318\textwidth}
    \centering
    \raisebox{.8pt}{\includegraphics[width=\textwidth]{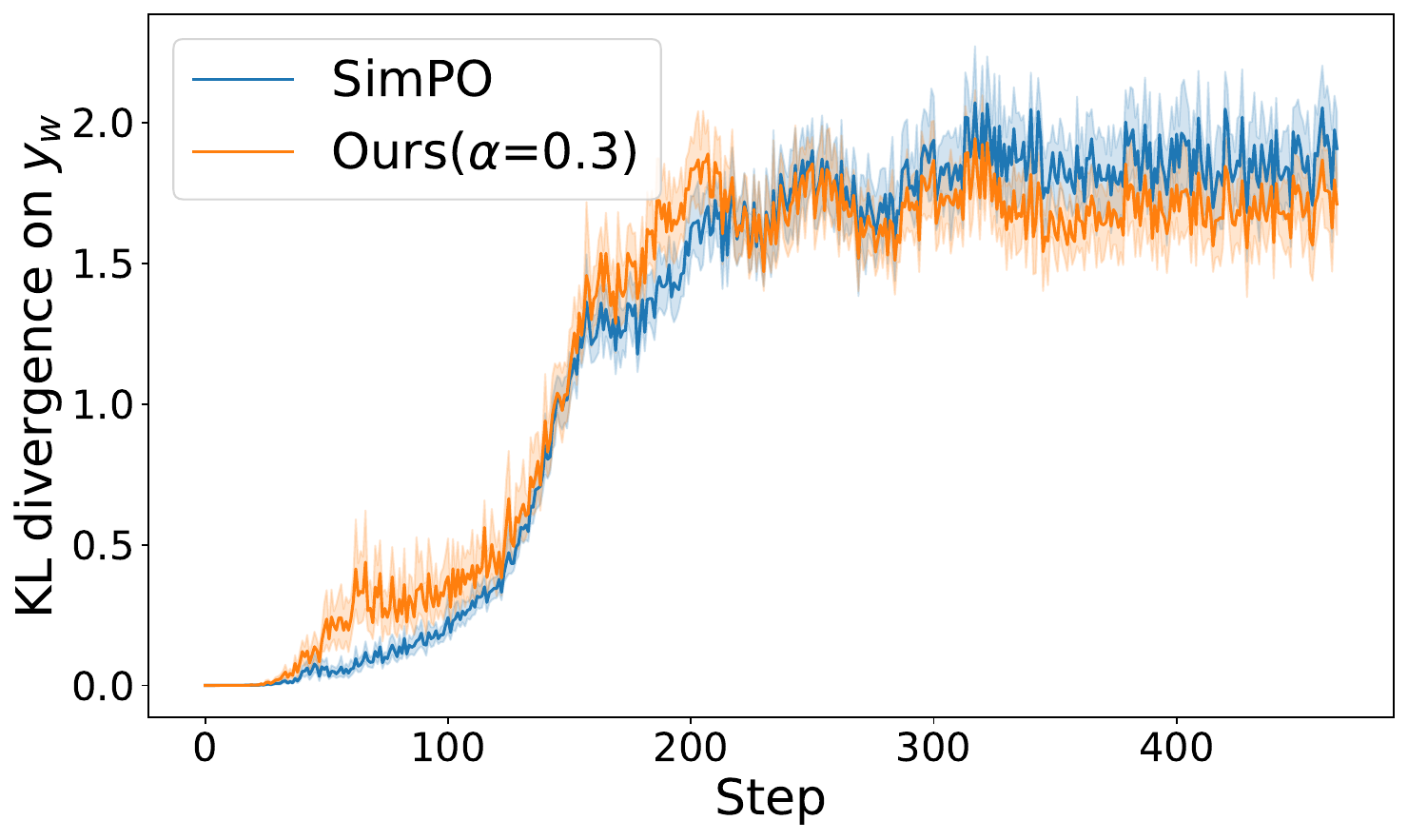}}
    \vspace{-15pt}
    \caption{KL divergence of $y_w$ over steps. \label{fig:kl_chosen}}
    \end{subfigure}%
    \begin{subfigure}[t]{0.318\textwidth}
    \centering
    \includegraphics[width=\textwidth]{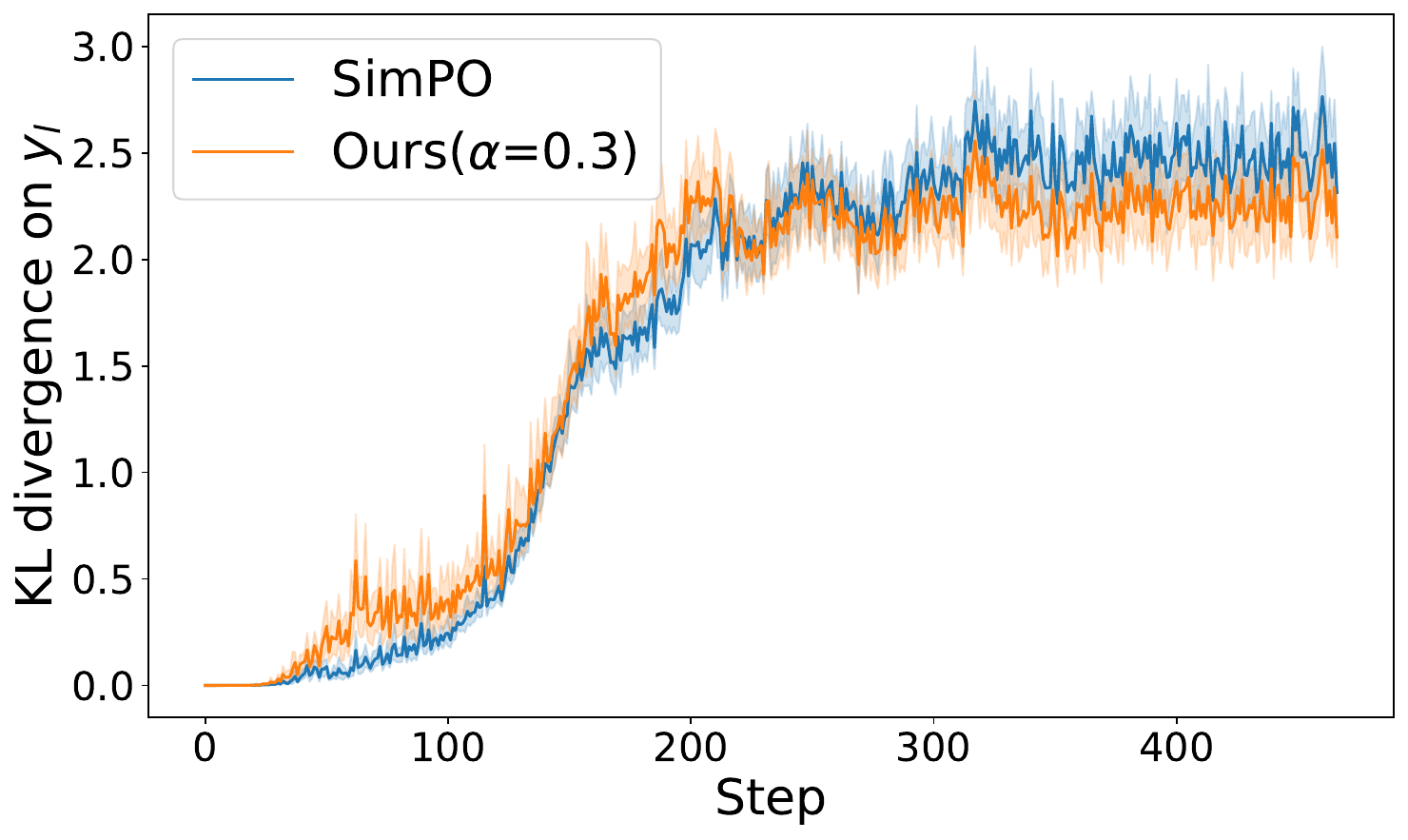}
    \vspace{-15pt}
    \caption{KL divergence of $y_l$ over steps. \label{fig:kl_rejected}}
    \end{subfigure}%
    \begin{subfigure}[t]{0.318\textwidth}
    \centering
    \includegraphics[width=\textwidth]{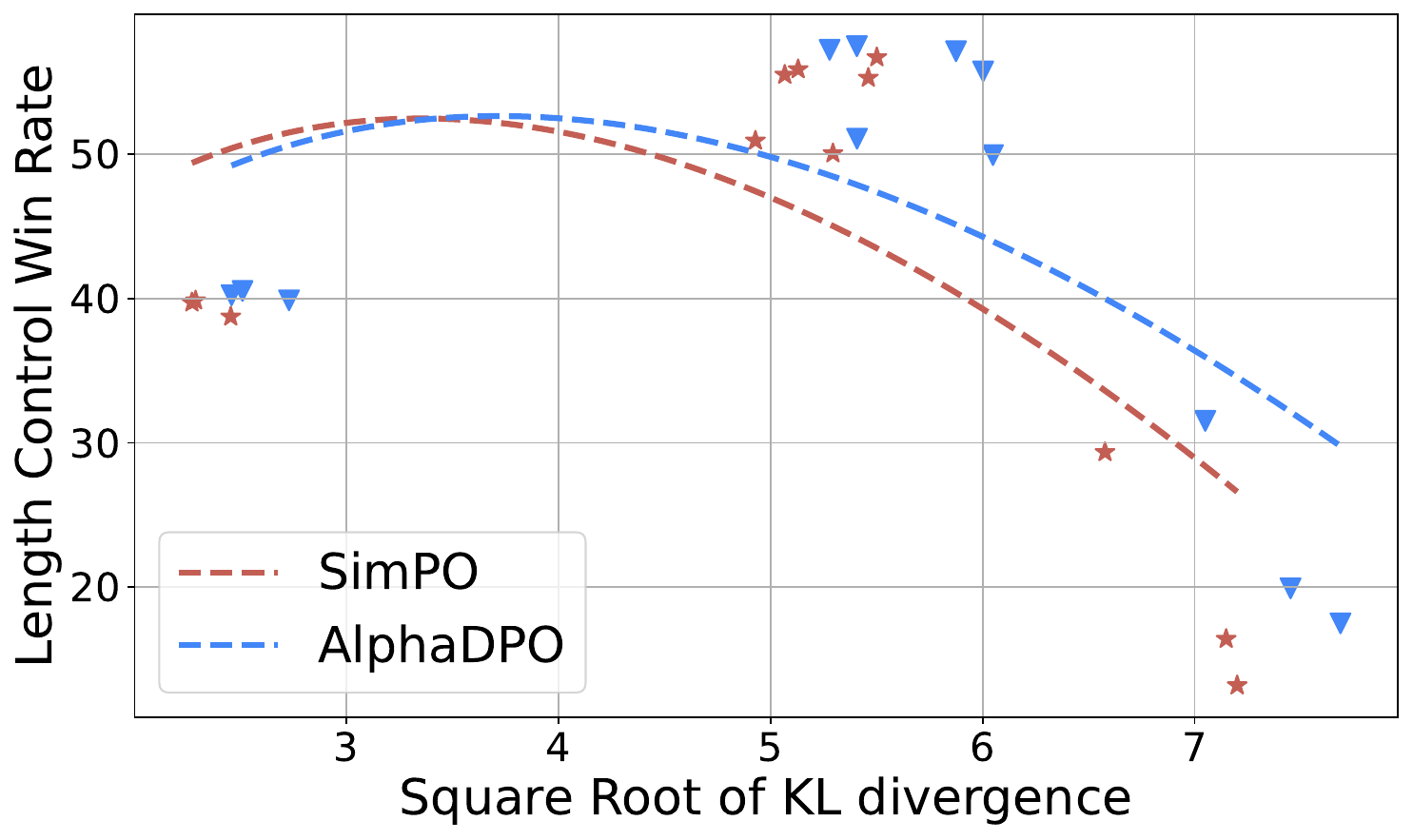}
    \vspace{-15pt}
    \caption{LC and KL divergence.\label{fig:LC_tradeoff}}
    \end{subfigure}
     \caption{Analysis of KL divergence and LC trade-off. (a) KL divergence for chosen samples ($y_w$), (b) KL divergence for rejected samples ($y_l$), and (c) relationship between LC and KL divergence.}
    \label{fig:kl_analysis} 
\end{figure*}
\begin{figure*}[t]
    \begin{subfigure}[t]{0.318\textwidth}
    \centering
    \includegraphics[width=\textwidth]{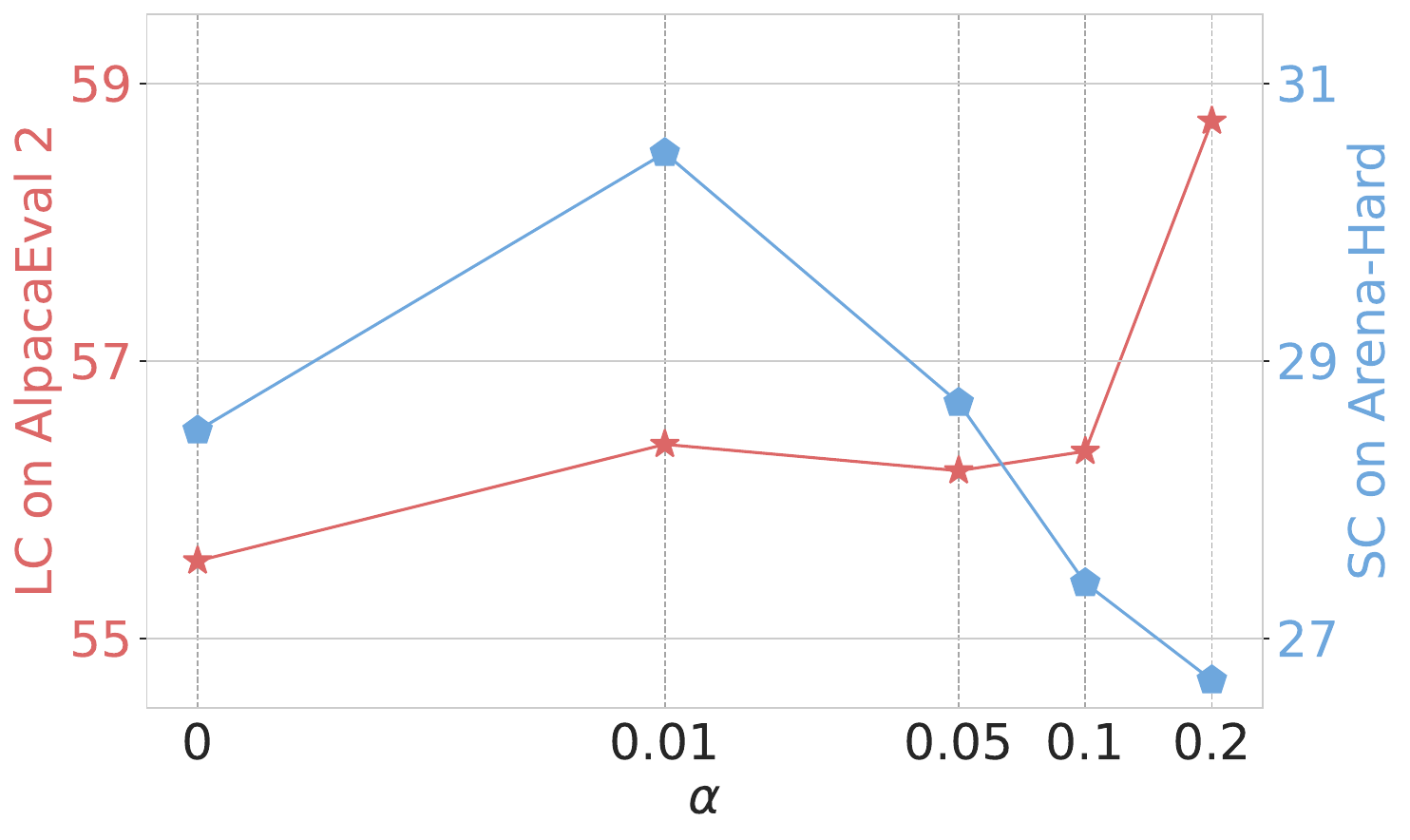}
    \end{subfigure}%
    \begin{subfigure}[t]{0.318\textwidth}
    \centering
    \includegraphics[width=\textwidth]{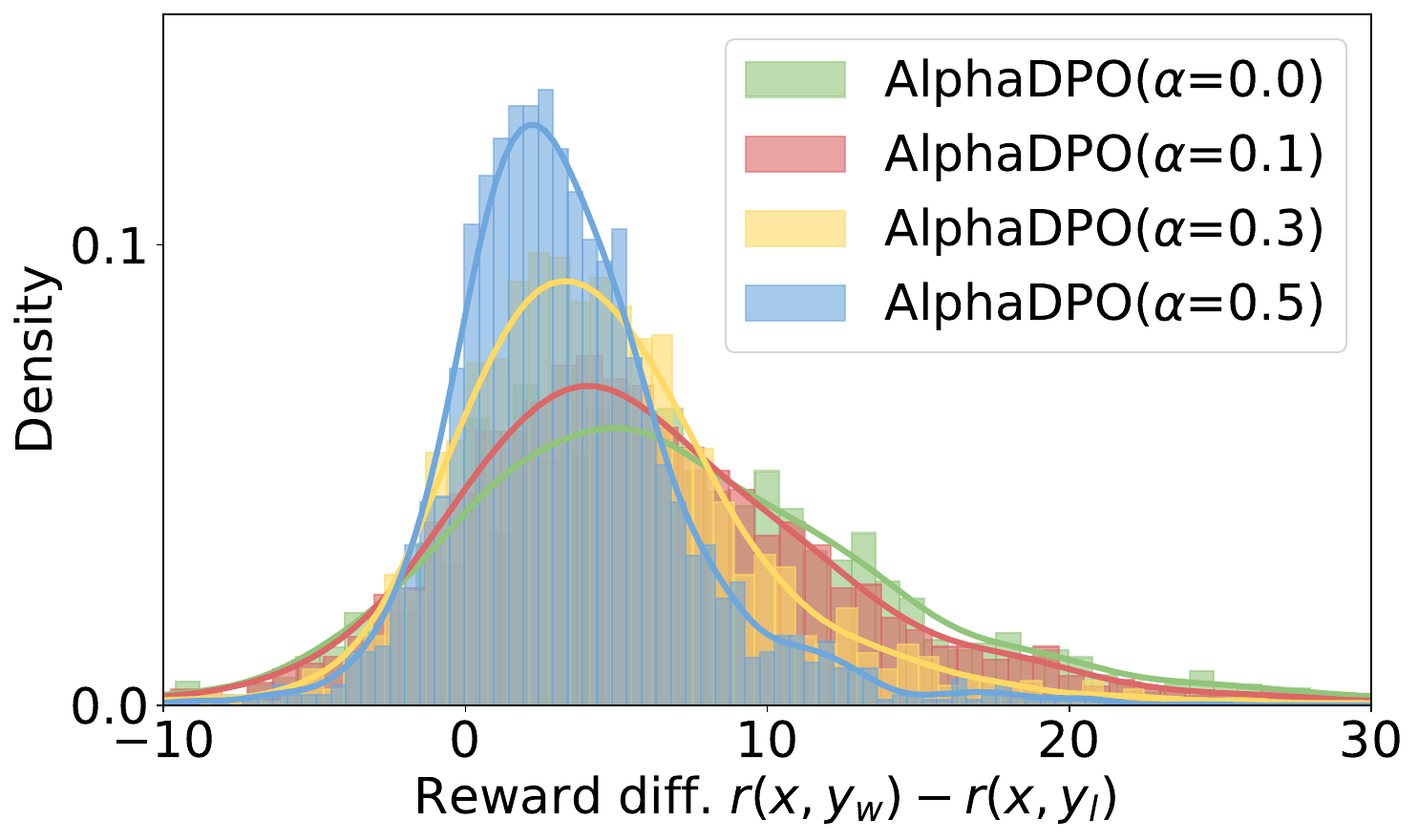}
    \end{subfigure}%
    ~
    \begin{subfigure}[t]{0.318\textwidth}
    \centering
    \includegraphics[width=\textwidth]{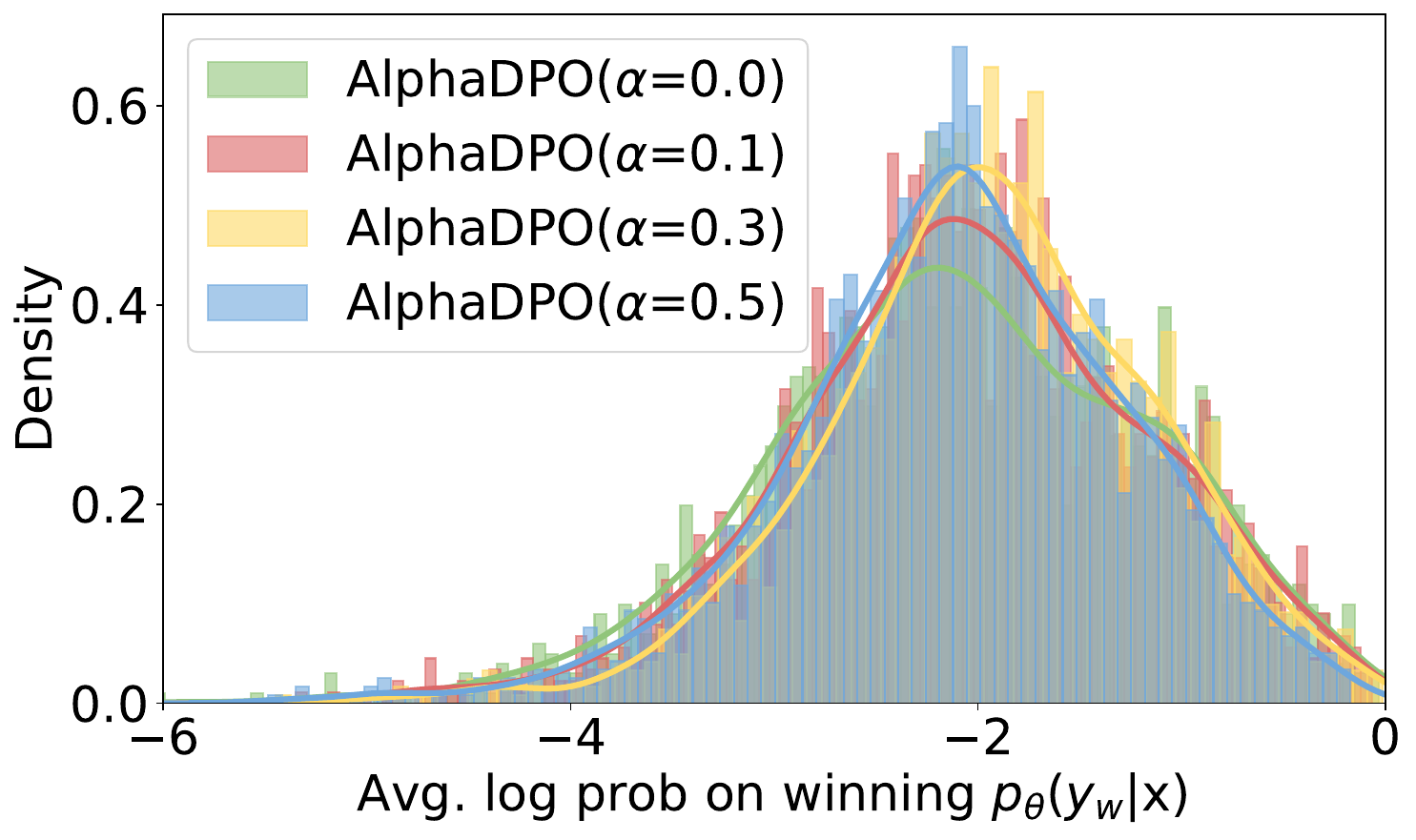}
    \end{subfigure}
    \caption{
    Impact of $\alpha$ on (a) LC and SC win rate, (b) reward difference distribution, and (c) log-likelihood distribution of chosen responses in \method{}.
    }
    \label{fig:kl_study3} 
\end{figure*}
\textbf{Mitigating Over-Optimization.} Over-optimization, as described by \citet{gao2023scaling} and \citet{scalinglaw2}, refers to a phenomenon where model performance exhibits a hump-shaped pattern across different targets: beyond an optimal point, further increasing the KL budget results in diminishing returns. To investigate this, we evaluate SimPO and AlphaDPO at four intermediate checkpoints, corresponding to different KL budgets. As illustrated in Figure~\ref{fig:LC_tradeoff}, it is intriguing that while the performance of our approach does decrease with increasing KL budget, the decline is relatively modest. This indicates that our method effectively mitigates the issue of over-optimization.

\subsection{The Impact of $\alpha$ in \method{}}
\textbf{Effect of $\alpha$ on Performance.} We investigated how the parameter $\alpha$ in \method{} impacts the win rate on AlpacaEval 2 and Arena-Hard. The results, as shown in Figure~\ref{fig:kl_study3}~(a), indicate that the style-control win rate on Arena-Hard initially increases and then decreases with increasing $\alpha$. In contrast, the length-control win rate on AlpacaEval 2 exhibits a consistently increasing trend. This suggests that the optimal value of $\alpha$ varies depending on the evaluation benchmarks. Further experiments refer to Appendix~\ref{apped_different_alpha}.

\textbf{Impact of $\alpha$ on the reward distribution.} We visualize the distribution of the learned reward margin $r(x, y_w) - r(x, y_l)$ and the log likelihood of the chosen response $\log \pi_\theta(y_w | x)$ under different $\alpha$ values in Figure~\ref{fig:kl_study3}~(b,c). Decreasing $\alpha$ results in a flatter reward margin, while the log likelihood distribution remains relatively unchanged. Conversely, in SimPO (\cf Figure~\ref{fig:kl_study2}), increasing $\gamma$ yields a flatter reward margin distribution but at the cost of also flattening the log likelihood distribution, which undesirably lowers the log likelihood of positive samples. This indicates that AlphaDPO can better balance the relationship between the reward margin and log likelihood.

\section{Discussion}
\textbf{Conclusion.} We proposed \method{}, an adaptive preference optimization method that improves LLM alignment by introducing a dynamic reward margin based on instance-specific differences. \method{} addresses limitations in previous methods like DPO and SimPO by balancing alignment and diversity through KL divergence control. Our theoretical guarantees and empirical results show that \method{} consistently outperforms baselines on benchmarks like AlpacaEval 2 and Arena-Hard, with significant improvements in win rates, establishing it as a robust solution for LLM fine-tuning.

\textbf{Limitations and Future Work.}
While \method{} enhances performance, it introduces an additional hyperparameter, \( \alpha \), requiring manual tuning. Future work could focus on developing an adaptive approach to automatically adjust this parameter. Additionally, although we show \method{}'s theoretical equivalence to online methods, it remains an offline approach. Extending it to online learning would allow real-time adaptation, broadening its application in interactive environments.
Lastly, we observed that different benchmarks, such as AlpacaEval 2 and Arena-Hard, require distinct parameter settings for optimal performance. Investigating a more generalized approach that adapts effectively across multiple benchmarks would further improve the model's versatility.
\section*{Impact Statement}
This work introduces AlphaDPO, an adaptive preference optimization framework that improves Large Language Model (LLM) alignment with human preferences. By establishing a more principled and effective mechanism for preference optimization, AlphaDPO contributes to developing more robustly aligned LLMs, a crucial step towards safer and more beneficial AI systems.
\nocite{langley00}

\section*{Acknowledgment}
This research is supported by the National Science and Technology Major Project (2023ZD0121102), and the National Natural Science Foundation of China (62302321, U24B20180). This research was also supported by the advanced computing resources provided by the Supercomputing Center of the USTC.

\bibliography{example_paper}
\bibliographystyle{icml2025}

\newpage
\appendix
\onecolumn
\section{Related Work}
\label{sec:related_work}
\textbf{Reinforcement learning from human feedback.}
RLHF is a technique that aligns large language models with human preferences and values~\citep{ChristianoLBMLA17, rlhf_2, Ouyang0JAWMZASR22, Azar2023AGT}. Traditional RLHF can be divided into three stages: supervised fine-tuning~\citep{zhou2024lima, taori2023stanford, geng2023koala, DatabricksBlog2023DollyV2, kopf2024openassistant, Ding2023EnhancingCL}, reward modeling~\citep{gao2023scaling, luo2023wizardmath, chen2024odin, lightman2023let, havrilla2024glore, lambert2024rewardbench}, and policy optimization~\citep{schulman2017proximal, anthony2017thinking}. In the third stage, Proximal Policy Optimization (PPO) is a widely used algorithm. 
Additionally, \citet{Xiong2023} proposed efficient algorithms for the reverse-KL regularized contextual bandit framework in RLHF. \citet{Ye2024} introduced provably efficient algorithms for KL-regularized Nash-Learning from Human Feedback (NLHF). Furthermore, \citet{JiHG24} developed an active-query-based PPO algorithm with specific regret bounds and query complexity.

\textbf{Offline direct preference optimization.}
Several alternative preference optimization objectives have been proposed in addition to DPO~\citep{DPO2023}. IPO~\citep{Azar2023AGT} addresses the overfitting issues associated with DPO. ORPO~\citep{Hong2024ORPOMP} and SimPO~\citep{SimPO2024} aim to eliminate the dependence on a reference model. R-DPO~\citep{Park2024DisentanglingLF} focuses on mitigating exploitation based on sequence length. KTO~\citep{Ethayarajh2024KTOMA} deals with preference optimization when data are not pairwise. CPO~\citep{xu2024contrastive} and $\beta$-DPO\citep{margin2} emphasize the quality of preference data. Another line of research explores comparisons among more than two instances~\citep{Dong0GZCPDZS023, LiPO, 00010LYHLW24, YuanYTWHH23}.

\textbf{Online direct preference optimization.}
Offline direct preference optimization methods are simple but rely on preference data collected offline. RLHF methods interact online with the language model being aligned but require policy gradients. In contrast, online direct preference optimization methods combine the advantages of both approaches. \citet{YuanPCLSXW24} proposed a ``self-rewarding'' approach in which the policy being aligned provides online feedback to itself. Alternatively, OAIF~\citep{OAIF2024} is a novel online preference optimization method that can leverage feedback from any LLM, including those stronger than the LLM being aligned. \citet{SwamyDK0A24} also concurrently investigate the importance of online preference but still rely on reward models (RMs). SELMA~\citep{SELMA2024} improves exploration efficiency by selectively favoring responses with high potential rewards rather than indiscriminately sampling unseen responses.
\section{Proofs}
\subsection{Proof of Theorem \ref{thm:common_framework}}
\label{proof_bound2}
\commonframework*
\begin{proof}
Let \( U(y|x) \) denote a uniform distribution over the vocabulary \( \mathcal{V} \) for a given input \( x \). Specifically, for any sequence \( y \), the uniform distribution is defined as:
\[
U(y|x) = \prod_{t=1}^{|y|} \frac{1}{|\mathcal{V}|} = \left( \frac{1}{|\mathcal{V}|} \right)^{|y|}.
\]
Consider the DPO loss function:
\[
\mathcal{L}_{\text{DPO}}(\pi_\theta; \pi_{\text{ref}}) = - \mathbb{E}_{(x, y_w, y_l) \sim \mathcal{D}} \left[ \log \sigma \left( \beta \log \frac{\pi_\theta(y_w|x)}{\pi_\theta(y_l|x)} - \beta \log \frac{\pi_{\text{ref}}(y_w|x)}{\pi_{\text{ref}}(y_l|x)} \right) \right].
\]
By substituting \( \pi_{\text{ref}} = U \), the term involving the reference policy simplifies to:
\[
\beta \log \frac{\pi_{\text{ref}}(y_w|x)}{\pi_{\text{ref}}(y_l|x)} = \beta \left( \log U(y_w|x) - \log U(y_l|x) \right) = \gamma,
\]
where \( \gamma \) is a constant. This constancy arises because \( y_w \) and \( y_l \) are chosen from distinct subsets of the vocabulary, ensuring that \( \log U(y_w|x) - \log U(y_l|x) \) does not depend on the lengths of the sequences but is instead determined by the fixed probabilities of the respective subsets. Consequently, \( \gamma \) remains fixed across all samples in \( \mathcal{D} \).

Substituting back into the DPO loss function, we obtain:
\[
\mathcal{L}(\pi_\theta; U) = - \mathbb{E}_{(x, y_w, y_l) \sim \mathcal{D}} \left[ \log \sigma \left( \beta \left( \log \pi_\theta(y_w|x) - \log \pi_\theta(y_l|x) \right) - \gamma \right) \right].
\]
Under the length-normalized reward formulation, the rewards are adjusted by the lengths of the sequences \( y_w \) and \( y_l \). This normalization yields:
\[
\mathcal{L}_{\text{LN}}(\pi_\theta; U) = - \mathbb{E}_{(x, y_w, y_l) \sim \mathcal{D}} \left[ \log \sigma \left( 
\frac{\beta}{|y_w|} \log \pi_\theta(y_w|x) - 
\frac{\beta}{|y_l|} \log \pi_\theta(y_l|x) - \gamma \right) \right].
\]
Here, \( \gamma \) remains a fixed constant since it is derived from the uniform distribution over distinct vocabulary subsets corresponding to \( y_w \) and \( y_l \).

Comparing this with the SimPO loss function:
\[
\mathcal{L}_{\text{SimPO}}(\pi_{\theta}) = - \mathbb{E}_{(x, y_w, y_l) \sim \mathcal{D}} \left[ \log \sigma \left( \frac{\beta}{|y_w|}\log \pi_{\theta}(y_w|x) - \frac{\beta}{|y_l|}\log \pi_{\theta}(y_l|x) - \gamma \right) \right],
\]
it is evident that:
\[
\mathcal{L}_{\text{LN}}(\pi_\theta; U) = \mathcal{L}_{\text{SimPO}}(\pi_\theta).
\]
Thus, when the reference policy \( \pi_{\text{ref}} \) is a uniform distribution over distinct vocabulary subsets for \( y_w \) and \( y_l \), the DPO loss function simplifies to the SimPO loss function with \( \gamma \) being a fixed constant. This establishes that SimPO is a special case of DPO under the specified conditions.
\end{proof}

\subsection{Proof of Lemma \ref{lemma:margin_equivalence_test}}
\label{proof_marin}
\marginequivalence*
\begin{proof}[Proof of Lemma \ref{lemma:margin_equivalence_test}]
We begin by expanding the definition of $\delta(x, y_w, y_l)$:

\[
\delta(x, y_w, y_l) = \beta \mathbb{D}_{\text{SeqKL}}[x, y_l; \piref \Vert \pi_\theta] - \beta \mathbb{D}_{\text{SeqKL}}[x, y_w; \piref \Vert \pi_\theta]
\]

Expanding each sequential KL divergence, we have:

\[
\mathbb{D}_{\text{SeqKL}}[x, y; \piref \Vert \pi_\theta] = \sum_{t=1}^{|y|} \mathbb{E}_{z \sim \piref}\left[\log \frac{\piref(z \mid [x, y^{< t}])}{\pi_\theta(z \mid [x, y^{< t}])}\right]
\]

Substituting this into the expression for $\delta$, we obtain:

\[
\delta(x, y_w, y_l) = \beta \sum_{t=1}^{|y_l|} \mathbb{E}_{z \sim \piref}\left[\log \frac{\piref(z \mid [x, y^{< t}_l])}{\pi_\theta(z \mid [x, y^{< t}_l])}\right] - \beta \sum_{t=1}^{|y_w|} \mathbb{E}_{z \sim \piref}\left[\log \frac{\piref(z \mid [x, y^{< t}_w])}{\pi_\theta(z \mid [x, y^{< t}_w])}\right]
\]

Under the assumption that the reference policy $\piref$ has large errors, we approximate the expectation $\mathbb{E}_{z \sim \piref}$ with a uniform distribution. This approximation simplifies each expectation term as follows:

\[
\sum_{t=1}^{|y|}\mathbb{E}_{z \sim \piref}\left[\log \frac{\piref(z \mid [x, y^{< t}])}{\pi_\theta(z \mid [x, y^{< t}])}\right] \approx \log \frac{\pi_\theta(y \mid x)}{\piref(y \mid x)}
\]

Applying this approximation to both sequential KL divergence terms, we obtain:

\[
\delta(x, y_w, y_l) \approx \beta \left( \log \frac{\pi_\theta(y_l \mid x)}{\piref(y_l \mid x)} - \log \frac{\pi_\theta(y_w \mid x)}{\piref(y_w \mid x)} \right)
\]

This expression can be rewritten as:

\[
\delta(x, y_w, y_l) \approx \beta \left( \log \dfrac{\pi_\theta(y_w \mid x)}{\piref(y_w \mid x)} - \log \dfrac{\pi_\theta(y_l \mid x)}{\piref(y_l \mid x)} \right) = M(x, y_w, y_l)
\]

where $M(x, y_w, y_l)$ is the margin term defined in the \method objective. Thus, we have shown that:

\[
\delta(x, y_w, y_l) \approx M(x, y_w, y_l)
\]

This completes the proof.
\end{proof}

\rebuttal{
\section{The motivation for the proposed $\hat{\pi}_{\text{ref}}(y|x)$ }
\label{motivation_of_pi_ref}
The motivation for the proposed reference policy $\hat{\pi}_{\text{ref}}(y|x)$ can be clarified as follows:
\begin{itemize}[leftmargin=*]
    \item \textbf{Utility Theory Perspective:} The proposed $\hat{\pi}_{\text{ref}}(y|x)$ is designed with the uniform distribution $U(y|x)$ as a baseline. The term $\left(\frac{\pi_\theta(y|x)}{\pi_{\text{ref}}(y|x)}\right)^\alpha$ dynamically adjusts the reward margin by balancing contributions from the policy and reference models. This mechanism can be interpreted through the lens of utility theory as relative attractiveness, enabling adaptive instance-specific reward modeling.
   \item \textbf{Gradient Perspective}
   By introducing $\hat{\pi}_{\text{ref}}(y|x)$, the framework mitigates the label flipping issues found in DPO or SimPO.  
   In the SimPO framework, the gradient is expressed as:
   $$
   \nabla_\theta\mathcal{L}_{\mathrm{SimPO}}(\pi_\theta)=-\beta\mathbb{E}_{(x,y_w,y_l)\sim\mathcal{D}}\left[s_\theta\left(\frac1{|y_w|}\nabla_\theta\log\pi_\theta(y_w|x)-\frac1{|y_l|}\nabla_\theta\log\pi_\theta(y_l|x)\right)\right],
   $$
   where  
   $s_\theta=\sigma\left(\frac\beta{|y_l|}\log\pi_\theta(y_l|x)-\frac\beta{|y_w|}\log\pi_\theta(y_w|x)+\gamma\right)$.

   This formulation may amplify weights when the reward estimate is incorrect. By contrast, under AlphaDPO:
   $$
   s_\theta=\sigma\left(\frac\beta{|y_l|}\log\pi_\theta(y_l|x)-\frac\beta{|y_w|}\log\pi_\theta(y_w|x)+\gamma + \alpha M(x,y_w,y_l)\right),
   $$
   the additional $\alpha M(x,y_w,y_l)$ component increases weight when the reward estimate is accurate, ensuring a more robust reward signal.
   \item \textbf{Motivational Core}
   The central goal of the proposed AlphaDPO is to address the unreliability of the reference policy, as outlined in Section 3.1. By integrating the policy model into the reference model design, the quality of the reference model is enhanced, improving fine-tuning performance. Similar concepts have been explored in recent works \citep{refence1,refence2}.
\end{itemize}
}

\section{Experiments}
\subsection{Implementation Details}
\label{sec:appendix_implement}
We observed that the performance of various methods is highly sensitive to model parameters and learning rates. To ensure a fair comparison, we conducted a hyperparameter search as specified in the respective papers. The specific search ranges are detailed in Table \ref{tab:hyperparams_baseline}. Furthermore, due to recent updates to both Llama3-8b and Instruct-7b models, we had to re-implement SimPO as the original results were no longer directly applicable.

\paragraph{Training hyperparameters.}
For other parameters, we used a consistent batch size of 128 across all methods. The learning rate was searched within the range of [3e-7, 5e-7, 8e-7, 1e-6], and all models were trained for a single epoch with a cosine learning rate schedule and a 10\% warmup phase. Adam was used as the optimizer~\citep{kingma2014adam}. Additionally, the maximum sequence length was set to 2048. 
\begin{table*}[!h]
    \vspace{-1em}
    \caption{Various preference optimization objectives and hyperparameter search range.}
    \label{tab:hyperparams_baseline}
    \centering
    \resizebox{\textwidth}{!}{
    \small
    \begin{tabular}{lll}
    \toprule 
    \textbf{Method} & \textbf{Objective} & \textbf{Hyperparameter} \\ \midrule
    DPO~\citep{DPO2023} & $-\log \sigma \left( \beta \log \frac{\pi_\theta(y_w|x)}{\pi_{\text{ref}}(y_w|x)} - \beta \log \frac{\pi_\theta(y_l|x)}{\pi_{\text{ref}}(y_l|x)}\right)$ & $\beta \in [0.01, 0.05, 0.1]$ \\ \midrule 
    IPO~\citep{Azar2023AGT} & $ \left( \log \frac{\pi_\theta(y_w|x)}{\pi_{\text{ref}}(y_w|x)} - \log \frac{\pi_\theta(y_l|x)}{\pi_{\text{ref}}(y_l|x)} - \frac{1}{2\tau} \right)^2$ & $\tau \in [0.01, 0.1, 0.5, 1.0]$ \\  \midrule 
    CPO~\citep{xu2024contrastive} &  $-\log \sigma  \left(\beta \log \pi_\theta(y_w|x) - \beta \log \pi_\theta(y_l|x) \right) - \lambda \log \pi_\theta (y_w|x)$ & $\alpha = 1.0, \,\, \beta \in [0.01, 0.05, 0.1]$ \\ \midrule
    \multirow{2}{*}{KTO~\citep{Ethayarajh2024KTOMA}} & $-\lambda_w \sigma \left( \beta \log \frac{\pi_\theta(y_w|x)}{\pi_{\text{ref}}(y_w|x)} - z_{\text{ref}} \right) +  \lambda_l \sigma \left( z_{\text{ref}} - \beta \log \frac{\pi_\theta(y_l|x)}{\pi_{\text{ref}}(y_l|x)} \right),\,$ & $\lambda_l = \lambda_w = 1.0$ \\  
    & $\text{where} \,\, z_{\text{ref}} = \mathbb{E}_{(x, y) \sim \mathcal{D}} \left[\beta \text{KL}\left( \pi_\theta(y|x) || \pi_{\text{ref}}(y|x) \right)  \right]$ & $\beta \in [0.01, 0.05, 0.1]$ \\ \midrule
    \multirow{2}{*}{ORPO~\citep{Hong2024ORPOMP}} & $-\log p_\theta(y_w|x) - \lambda  \log \sigma \left(\log \frac{p_\theta(y_w|x)}{1 - p_\theta(y_w|x)} - \log \frac{p_\theta(y_l|x)}{1 - p_\theta(y_l|x)}  \right),\,$ & \multirow{2}{*}{$\lambda \in [0.1, 0.5, 1.0, 2.0]$} \\  
    & $\text{where} \,\, p_\theta(y|x) = \exp\left( \frac{1}{|y|} \log \pi_\theta(y|x) \right)$ \\  \midrule
    \multirow{2}{*}{R-DPO~\citep{Park2024DisentanglingLF}} & \multirow{2}{*}{$-\log \sigma \left( \beta \log \frac{\pi_\theta(y_w|x)}{\pi_{\text{ref}}(y_w|x)} - \beta \log \frac{\pi_\theta(y_l|x)}{\pi_{\text{ref}}(y_l|x)} - \left(\alpha |y_w| - \alpha |y_l| \right) \right)$} & $\alpha \in [0.05, 0.1, 0.5, 1.0]$ \\
    & & $\beta \in [0.01, 0.05, 0.1]$ \\
    \midrule 
    \multirow{2}{*}{SimPO~\citep{SimPO2024}} & \multirow{2}{*}{$-\log \sigma  \left( \frac{\beta}{|y_w|} \log \pi_\theta(y_w|x) - \frac{\beta}{|y_l|} \log \pi_\theta(y_l|x) - \gamma \right)$} & $\beta \in [2.0, 4.0, 6.0, 8.0]$ \\
    & & $\gamma \in [0.3, 0.5, 1.0, 1.2, 1.4, 1.6]$ \\
    \midrule 
    \multirow{2}{*}{\method} & {$-\log \sigma \left( u(x,y_w,y_l) - \operatorname{sg} \left[ \gamma + \alpha M^\ast(x, y_w, y_l) \right] \right)$} & $\beta\in [2.5,10.0], \,\, \gamma \in [0.1, 0.3, 0.5]$  \\
    & $\text{where} \,\, u(x,y_w,y_l)=\frac{\beta}{|y_w|} \log \pi_\theta(y_w|x) - \frac{\beta}{|y_l|} \log \pi_\theta(y_l|x)$ & $\alpha \in [1e-2, 5e-2, 0.1, 0.2]$ \\
    \bottomrule
    \end{tabular}
    }
\end{table*}

\begin{table*}[h!]
\vspace{-1.3em}
\caption{The hyperparameter values in \method used for each training setting.}
\label{tab:hyperparams_simpo}
\centering
\small
\begin{tabular}{lcccc}
\toprule 
\textbf{Setting} & $\beta$ & $\gamma$ & $\alpha$ & Learning rate \\ \midrule
\textbf{Mistral-Instruct} & 2.5 & 0.15 & 5e-2 & 6e-7 \\
\textbf{Llama3-Instruct} & 2.5 & 0.6 & 0.2 & 1e-6 \\
\textbf{Llama3-Instruct-v0.2} & 10.0 & 0.4 & 0.2 & 1e-6 \\
\textbf{Gemma2-Instruct} & 10.0 & 0.4 & 5e-2 & 8e-7 \\
\bottomrule
\end{tabular}
\end{table*}
\paragraph{Hyperparameter in \method{}.}
Table~\ref{tab:hyperparams_simpo} outlines the hyperparameters used for \method~under various settings. It's important to note that while our approach involves three key parameters, we have found through experience that \(\beta\) can be reliably set to 10.0 by default. Among these parameters, \(\gamma\) typically requires more careful tuning. As for \(\alpha\), we have observed consistent performance improvements when set to 5e-2 by default. If you are already familiar with the parameter settings for SimPO, you can focus your search primarily on \(\alpha\) or simply adopt the default setting of \(\alpha = 5e-2\).
\paragraph{Decoding hyperparameters.}
The decoding hyperparameters used in this study are the same as those employed by SimPO\footnote{\url{https://github.com/princeton-nlp/SimPO/tree/main/eval}}. We extend our sincere gratitude to the SimPO team for sharing their invaluable insights.

\paragraph{Computation environment.}
All training experiments presented in this paper were conducted using 8×A100 GPUs, as per the procedures detailed in the alignment-handbook repository.\footnote{\url{https://github.com/huggingface/alignment-handbook}}

\subsection{\method without length-normalized}
\label{sec:appendix_ln}
In this paper, we consider length-normalized training as a stability technique and not as a primary contribution of this work. Existing research \citep{SimPO2024} has demonstrated that length normalization can indeed enhance model performance, particularly with respect to the length control win rate. However, to validate the general applicability of \method{}—specifically, its stability and performance without length normalization—we conducted experiments across several models: \href{https://huggingface.co/meta-llama/Meta-Llama-3-8B-Instruct}{meta-llama/Meta-Llama-3-8B-Instruct}, \href{https://huggingface.co/mistralai/Mistral-7B-Instruct-v0.2}{mistralai/Mistral-7B-Instruct-v0.2}, and \href{https://huggingface.co/google/gemma-2-9b-it}{google/gemma-2-9b-it}. 
\begin{figure}[t]
    \includegraphics[width=\textwidth]{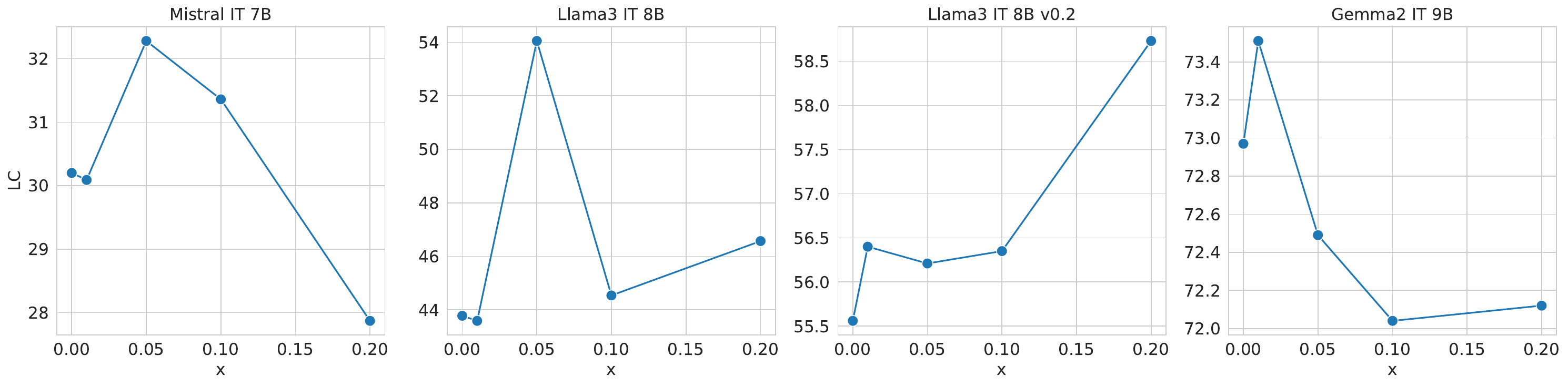}
    \caption{\method LC on AlpacaEval 2 with different $\alpha$ values.}
    \label{fig:alpha_lc}
    \vspace{-1.5em}
\end{figure}

We evaluated DPO, SimPO without length normalization, and \method{} without length normalization. The experimental results, as shown in Table \ref{tab:ln_exp}, demonstrate that \method{} consistently achieves performance improvements even without the use of length normalization. This indicates the robustness and general effectiveness of \method{}.

\begin{table*}[h!]
    \caption{Performance comparison without length-normalization on AlpacaEval2. ``LC'' denotes the length-controlled win rate, and ``WR'' represents the raw win rate.}
    \label{tab:ln_exp}
    \begin{center}
    \resizebox{\textwidth}{!}{%
    \begin{tabular}{lcccccccc}
    \toprule
    \multirow{2}{*}{\textbf{Method}} & \multicolumn{2}{c}{\textbf{Llama3-Instruct (8B)}} & \multicolumn{2}{c}{\textbf{Mistral-Instruct (7B)}} & \multicolumn{2}{c}{\textbf{Llama3-Instruct v0.2 (8B)}} & \multicolumn{2}{c}{\textbf{Gemma2-Instruct (9B)}} \\
    \cmidrule(lr){2-3} \cmidrule(lr){4-5} \cmidrule(lr){6-7} \cmidrule(lr){8-9}
    & \textbf{LC (\%)} & \textbf{WR (\%)} & \textbf{LC (\%)} & \textbf{WR (\%)} & \textbf{LC (\%)} & \textbf{WR (\%)} & \textbf{LC (\%)} & \textbf{WR (\%)} \\
    \midrule
    DPO & 40.2&  38.1 & 20.3 & 18.0 & 51.1 & 53.3 & 70.2 & 66.9  \\
    SimPO w/o LN & 42.4 & 40.4 & 30.5 & 38.2 & 49.2 & 52.6 & 71.2 & 69.9 \\
    \method w/o LN & \textbf{44.4} & \textbf{42.6} & \textbf{32.0} & \textbf{38.4} &  \textbf{51.1} & \textbf{54.0}  & \textbf{72.7} & \textbf{70.5} \\
    \bottomrule
    \end{tabular}
    }
    \end{center}
\end{table*}

\subsection{\method{} with differenct $\alpha$}
\label{apped_different_alpha}
To analyze the impact of $\alpha$ on the model, we adjust its value for four different models. The results are illustrated in Figure \ref{fig:alpha_lc}. When $\alpha$ is set to 0, the model degenerates to SimPO. As $\alpha$ increases, performance improves across all models, although the optimal value of $\alpha$ varies among them. This highlights the significance of $\alpha$. 

It is noteworthy that within the parameter tuning range [1e-2, 5e-2, 0.1, 0.2], the optimal $\alpha$ values are consistently around 0.1 or even closer to 5e-2. 

\subsection{Comparison With TDPO}
\label{section_tdpo_com}
To investigate the relationship between TDPO and \method{}, we conducted the experiments, with the results outlined below.

\begin{table}[h!]
    \caption{Performance comparison between TDPO and \method{}.}
    \label{tab:tdpo}
    \begin{center}
    \resizebox{0.5\textwidth}{!}{%
    \begin{tabular}{lcc}
    \toprule
    \multirow{2}{*}{\textbf{Method}} & \multicolumn{2}{c}{\textbf{Llama3-Instruct (8B)}}  \\
    \cmidrule(lr){2-3}
    & \textbf{LC (\%)} & \textbf{WR (\%)}\\
    \midrule
    TDPO & 52.8 &  45.9  \\
    \method{} w/ $\delta(x, y_w, y_l)$ & 56.9 & 50.4 \\
    \method{} w/ ${M}(x, y_w, y_l)$ & 58.7 & 51.1 \\
    \bottomrule
    \end{tabular}
    }
    \end{center}
\end{table}

In its original form, TDPO did not perform well on Llama3-8B. By applying Lemma \ref{lemma:margin_equivalence_test}, we modified the expression $M(x, y_w, y_l)$ in \method{} to use TDPO’s $\delta(x, y_w, y_l)$, converting our sentence-level estimations to a token-level calculation. This adjustment resulted in a noticeable performance improvement, which we attribute to the length-normalization, $\gamma$ and z-score normalization of $\delta(x, y_w, y_l)$. Nevertheless, the modified TDPO still underperformed compared to \method{}. This indicates that, when the $\piref$ is suboptimal, token-level calculations are prone to significant errors.

\begin{figure}[t]
    \begin{subfigure}[t]{0.24\textwidth}
        \centering
        \includegraphics[width=\textwidth]{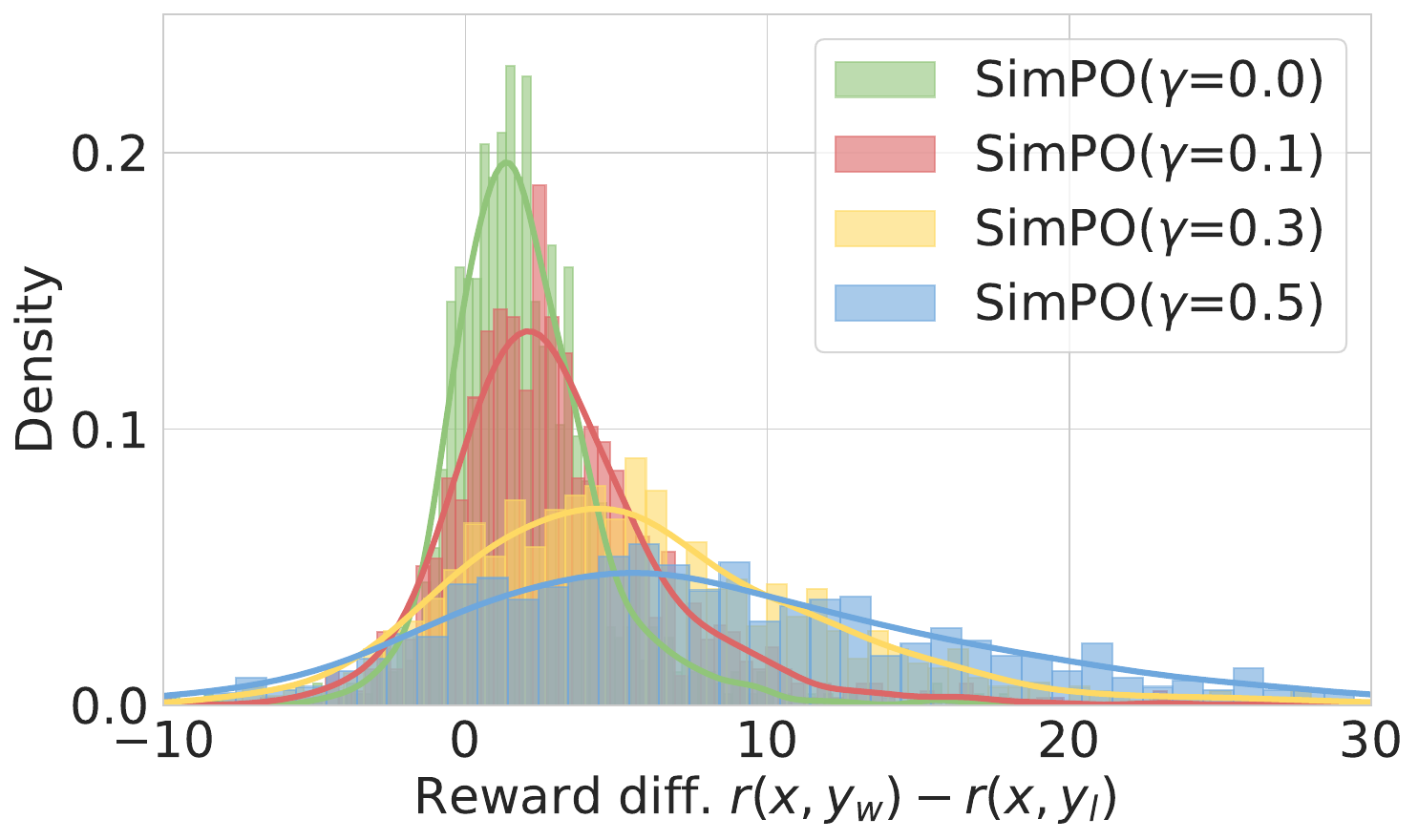}
    \end{subfigure}%
    \begin{subfigure}[t]{0.24\textwidth}
        \centering
        \includegraphics[width=\textwidth]{figs/alpha_dpo_gamma.pdf}
    \end{subfigure}%
    \begin{subfigure}[t]{0.24\textwidth}
        \centering
        \includegraphics[width=\textwidth]{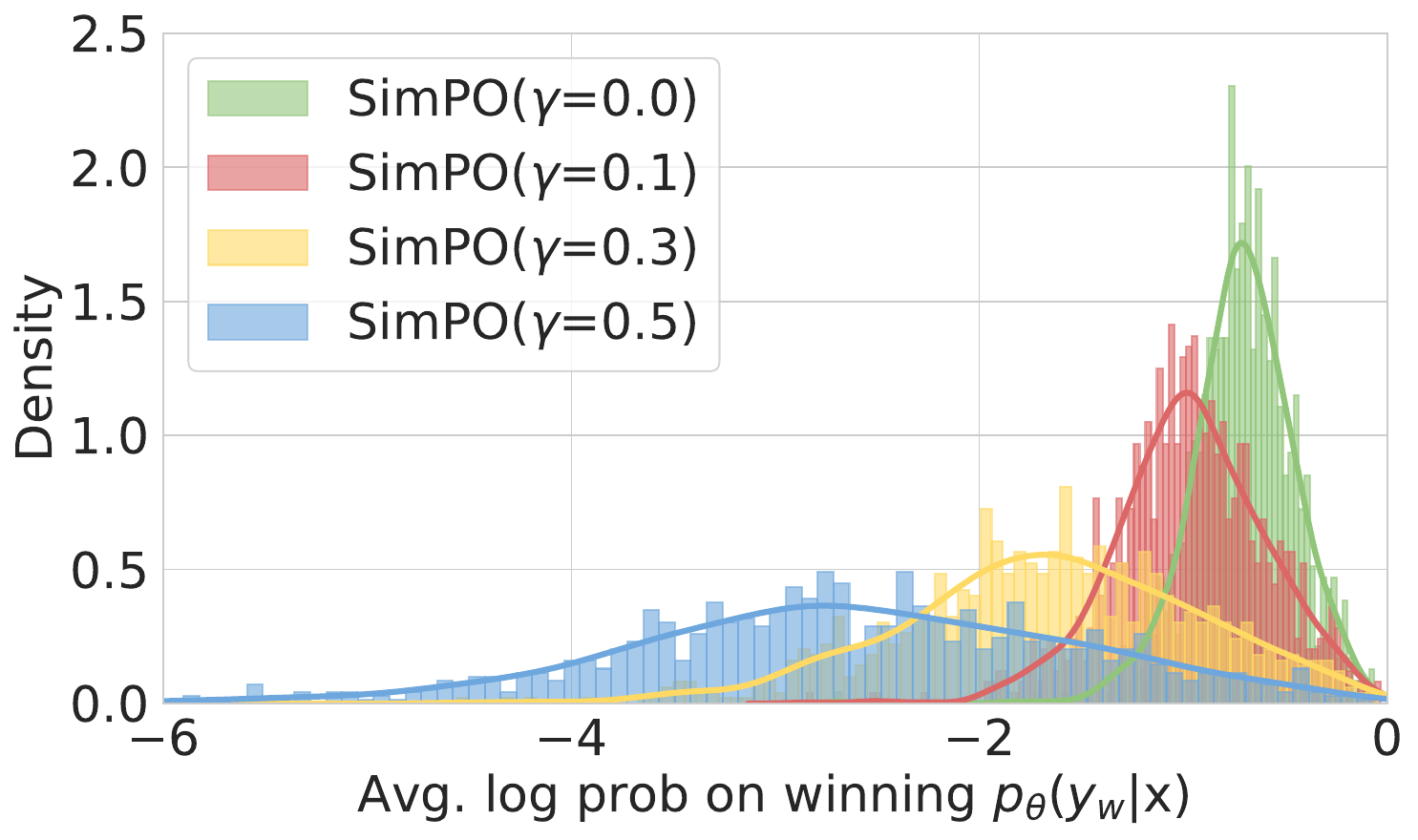}
    \end{subfigure}
    \begin{subfigure}[t]{0.24\textwidth}
        \centering
        \includegraphics[width=\textwidth]{figs/alpha_dpo_alpha_logps.pdf}
    \end{subfigure}
    \caption{
    (a) SimPO: Reward difference distribution under different $\gamma$ values.
    (b) \textbf{AlphaDPO}: Reward difference distribution under different $\alpha$ values.
    (c) SimPO: Log likelihood distribution on chosen responses under different $\gamma$ values.
    (d) \textbf{AlphaDPO}: Log likelihood distribution on chosen responses under different $\alpha$ values.
    }
    \vspace{-1em}  
    \label{fig:kl_study2} 
\end{figure}


\end{document}
